\documentclass[10pt,twoside]{IEEEtran}
\usepackage{amsopn}
\usepackage{amsthm}
\usepackage{etex}
\usepackage{graphicx}
\usepackage{endnotes}
\usepackage{hyperref}
\usepackage{epsfig,psfrag}
\usepackage{pst-all}
\usepackage{amssymb,amsfonts,upref,cite,epsf,color,bm}
\usepackage{graphicx}
\usepackage{color}
\usepackage{csvsimple}
\usepackage{url}
\usepackage{amsmath}
\usepackage{graphicx}
\usepackage{calc}
\usepackage{booktabs}
\usepackage{tikz,stackengine}
\usepackage{pgfplots}
\usetikzlibrary{calc, shapes, fit, positioning}
\graphicspath{../images/}



\newcommand\defeq{:=}

\DeclareMathOperator*{\argmin}{\mbox{arg\;min}}
\DeclareMathOperator*{\argmaximum}{\mbox{arg\;max}}

\newcommand\vect[1]{\mathbf #1}

\newcommand{\vx}{\vect{x}}  
\newcommand{\vy}{\vect{y}}  
\newcommand{\vz}{\vect{z}}

\newcommand{\mA}{\mathbf{A}}
\newcommand{\mD}{\mathbf{D}}

\newcommand{\mW}{\mathbf{W}}

\newcommand{\signalsize}{N}
\newcommand{\maxiter}{K}

\newcommand{\graphsigs}{\mathbb{R}^{\signalsize}} 
\newcommand{\edgesigs}{\mathbb{R}^{\nredges}}

\newcommand{\edges}{\mathcal{E}}
\newcommand{\nrnodes}{\signalsize}
\newcommand{\nredges}{E}

\newcommand{\nodes}{\mathcal{V}}
\newcommand{\cluster}{\mathcal{C}}
\newcommand{\graph}{\mathcal{G}}
\newcommand{\trainingset}{\mathcal{M}}
\newcommand{\boundary}{\partial \mathcal{F}}
\newcommand{\partition}{\mathcal{F}}
\newcommand{\xsig}{\vx}
\newcommand{\xsigval}[1]{x_{{#1}}}

\newcommand{\mU}{\mathbf{U}}
\newcommand{\mQ}{\mathbf{Q}}

\newcommand{\mS}{\mathbf{S}}

\newcommand{\primslp}{\hat{\vx}}
\newcommand{\dualslp}{\hat{\vy}}
\usepackage{algorithm}
\usepackage{algpseudocode}
\floatname{algorithm}{Algorithm}
\algnewcommand\algorithmicinput{\textbf{Input:}}
\algnewcommand\INPUT{\item[\algorithmicinput]}
\algnewcommand\algorithmicoutput{\textbf{Output:}}
\algnewcommand\OUTPUT{\item[\algorithmicoutput]}
\newtheorem{theorem}{Theorem}
\newtheorem{proposition}{Proposition}
\newtheorem{definition}[theorem]{Definition}

\newtheorem{corollary}[theorem]{Corollary}

\title{Semi-supervised Learning in Network-Structured Data via Total Variation Minimization}
\author{\hspace*{-10mm} Alexander Jung$^{1}$, Alfred O. Hero III$^2$, Alexandru Mara$^{3}$,
 Saeed Jahromi$^{4}$, Ayelet Heimowitz$^{5}$, Yonina C. Eldar$^{6}$
 \thanks{Parts of this work have been presented in the conference paper \cite{NNSPSampta2017}.}
}


\begin{document}
\pagestyle{plain} 

\maketitle

\begin{abstract}
We provide an analysis and interpretation of total variation (TV) minimization for semi-supervised learning 
from partially-labeled network-structured data. Our approach exploits an intrinsic duality between TV minimization and 
network flow problems. In particular, we use Fenchel duality to establish a precise equivalence of TV minimization and 
a minimum cost flow problem. This provides a link between modern convex optimization methods for 
non-smooth Lasso-type problems and maximum flow algorithms. We show how a primal-dual method for TV minimization 
can be interpreted as distributed network optimization. Moreover, we derive a novel condition on the network structure and available 
label information that ensures that TV minimization accurately learns (approximately) piece-wise constant graph signals. 
This condition depends on the existence of sufficiently large network flows between labeled data points. 
We verify our analysis in numerical experiments. 

\end{abstract}


\section{Introduction}
\label{sec_intro}
We consider machine learning using partially labeled network-structured datasets that arise in signal processing \cite{Chen2015}, 
image processing \cite{ShiMalik2000}, social networks, internet and bioinformatics \cite{NewmannBook,SemiSupervisedBook}. 
Such data can be described by an ``empirical graph,'' whose nodes represent individual data points that are connected by edges 
if they are ``similar'' in an application-specific sense. The notion of similarity can be based on physical proximity (in time or space), 
physical connection (communication networks), or statistical dependency (probabilistic graphical models) \cite{LauritzenGM,BishopBook,koller2009probabilistic}. 

Besides graph structure, datasets carry additional information in the form of labels associated with individual data points. In a social network, we might 
define the personal preference for some product as the label associated with a data point (user profile). Acquiring labels is often costly 
and requires manual labor or experiment design. Therefore, we assume to have access to the labels of only a few data points of a small 
``training set.'' This paper aims at learning or recovering the labels of all data points based on the knowledge of the labels of only a few 
data points. 

Network models lend naturally to scalable algorithms via message passing over the empirical graph \cite{DistrOptStatistLearningADMM}. 
Moreover, semi-supervised learning (SSL) methods borrow statistical strength between connected data points to overcome the absence of 
label information \cite{SemiSupervisedBook}. Indeed, many SSL methods rely on a cluster assumption: labels of close-by data points are 
similar \cite{SemiSupervisedBook,belkin2004regularization,NSZ09,elalaoui16}. 
This assumption is at the heart of many successful methods in graph signal processing \cite{ChenClustered2016}, 
imaging \cite{pock_chambolle}, trend filtering \cite{Wang2016}, anomaly detection \cite{fan2018}, information retrieval \cite{Kurland2014}, 
and social networks \cite{NewmannBook}. We implement this cluster assumption by treating the labels of data points as graph signals 
with a small TV, which is the sum of the absolute values of signal differences along the edges in the empirical graph. This turns SSL into a TV minimization problem \cite{NNSPSampta2017,NNSPFrontiers2018,ComplexitySLP2018,Wang2016,pock_chambolle}. 

TV minimization problems in grid-structured image data have been 
studied in \cite{pock_chambolle,pmlr-v49-huetter16}. 
For arbitrary networks, \cite{Wang2016} studied the statistical properties of TV minimization 
when applied to noisy but fully observed labels. Considering partially labeled datasets (with arbitrary 
network structure), \cite{NNSPFrontiers2018,NNSPSampta2017,WhenIsNLASSO} offer sufficient conditions 
on the network structure and label information such that TV minimization accurately learns the labels of all data points. 
These conditions are somewhat difficult to verify, as they involve the (unknown) cluster structure of 
the empirical graph. We present a novel condition, which can be verified by 
network flow algorithms (see Section \ref{sec_main_results} and \ref{sec_two_cluster}), ensuring 
TV minimization to accurately learn labels that form a piece-wise constant graph signal.  

The cluster assumption used in this paper is different from the smoothness assumption widely used in 
graph signal processing \cite{belkin2004regularization,SemiSupervisedBook}. The smoothness assumption 
requires connected nodes to have similar labels by forcing them to live in a small subspace spanned by a few 
eigenvectors of the graph Laplacian. In contrast, the cluster assumption allows the labels to vary significantly 
over edges between two different clusters (see Section \ref{sec_TV_min} for more details). 

While minimizing TV as well as minimizing the Laplacian quadratic form are both special 
cases of $p$-Laplacian minimization \cite{elalaoui16,kyng2015algorithms}, their statistical and computational 
properties are quite different. While the Laplacian quadratic form is a smooth convex function, the TV is a 
non-smooth convex function that requires more advanced optimization techniques such as proximal 
methods \cite{pock_chambolle,ProximalMethods}. Statistically, TV-based learning may be accurate in cases 
where the Laplacian quadratic form minimizer fails. 

We analyze TV minimization using a variant of the nullspace property which provides 
necessary and sufficient conditions for the success of $\ell_{1}$ based methods \cite{EldarKutyniokCS,RauhutFoucartCS,KabRau2015Chap}. 
In a similar spirit \cite{ZhaoKaba2018} studies recovery of sparse signals defined on the edges of 
the empirical graph. In contrast, we study piece-wise constant signals defined on nodes. 

This paper continues our studies \cite{ComplexitySLP2018,NNSPSampta2017,NNSPFrontiers2018} of 
statistical and computational aspects of SSL via TV regularization. The central theme of this paper is the 
duality between TV minimization and network flow problems. The relation between network flow problems and 
energy minimization has been studied mainly for discrete-valued graph signals \cite{Goldfarb2009,Chambolle2005,Kolmogorov2004}. 
However, it is not obvious how to generalize these methods to real-valued graph signals. 

It turns out that the duality between TV minimization and network flow problems can be established in an elegant 
fashion using the concept of convex conjugate functions. This duality allows us to  
apply efficient convex optimization methods for TV minimization (see Alg. \ref{alg_sparse_label_propagation_centralized}) 
to solve network flow problems and, in the other direction, unleashes existing network-flow algorithms \cite{BertsekasNetworkOpt} 
for TV minimization. 

Our detailed contributions are:  
\begin{itemize}
\item Our main result is Proposition \ref{prop_dual_TV_min_flow}, which states that 
the dual of TV minimization is equivalent to a minimum-cost network flow problem (see Section \ref{sec_dual_TV_Min}). 
\item An immediate consequence is Corollary \ref{cor_flow_satur_constant}, which 
characterizes the solutions of TV minimization. In contrast to our previous work, Corollary \ref{cor_flow_satur_constant} does not involve 
any signal model, such as piece-wise constant signals. 
\item We provide a novel interpretation of a message passing algorithm \cite[Alg.\ 2]{ComplexitySLP2018} for TV minimization
as distributed network flow optimization (see Section \ref{sec_spl_Alg}).
\item 
Proposition \ref{lem_flow_cond} provides a new condition ensuring that TV minimization is accurate. 
In contrast to previous work \cite{NNSPSampta2017,NNSPFrontiers2018}, this 
condition can be verified easily using existing network-flow algorithms (see Section \ref{sec_two_cluster}). 
\item We verify our theoretical analysis of TV minimization by several numerical experiments (see Section \ref{experimental_results}). 
\end{itemize}

{\bf Outline.}  
In Section \ref{sec_setup}, we formulate SSL for network-structured data as a convex TV minimization problem. 
We then discuss in Section \ref{sec_dual_TV_Min} how a dual problem of TV minimization can be defined. Exploiting the relation 
between TV minimization and its dual, we discuss in Section \ref{sec_spl_Alg} how to apply a particular 
instance of a proximal method \cite{ProximalMethods} to obtain a solution to TV minimization (and its dual). 
As detailed in Section \ref{sec_spl_Alg}, the resulting algorithm can be implemented as message passing on the 
empirical graph. In Section \ref{sec_main_results}, we present a sufficient condition on the available label information 
and the empirical graph such that TV minimization delivers accurate label estimates. Numerical experiments are discussed in Section \ref{experimental_results}.


\section{Problem Formulation}
\label{sec_setup}
We formalize SSL with network-structured data as an optimization problem. Section \ref{sec_emp_graph}  
introduces relevant concepts of graph theory. Section \ref{sec_cluster_assumption} introduces the cluster 
assumption using graph signals with a small TV. A particular class of such graph signals 
is constituted by piece-wise constant graph signals as defined in Section \ref{sec_cluster_assumption}. 
The cluster assumption leads naturally to a formulation of SSL as a TV minimization problem, which 
we define and discuss in Section \ref{sec_TV_min}. 

Let us fix some notation. Given a vector $\vx\!=\!(x_{1},\ldots,x_{n})^{T}$, 
we define the norms $\| \vx \|_{1} \defeq \sum_{l=1}^{n} |x_{l}|$ 
and $\| \vx \|_{\infty}\!\defeq\!\max_{i=1,\ldots,n} |x_{i}|$. The signum ${\rm sign } \{ \vx \}$ of a vector $\vx\!=\!\big(x_{1},\ldots,x_{d}\big)$ 
is the vector $\big({\rm sign } (x_{1}),\ldots,{\rm sign } (x_{d}) \big)\!\in\!\mathbb{R}^{d}$ 
with ${\rm sign } (x_{i})\!=\!1$ for $x_{i}\!>\!1$, ${\rm sign } (x_{i})\!=\!-1$ for $x_{i}\!\leq\!0$. 

The spectral norm of a matrix $\mA$ is denoted $\| \mA \|_{2} \defeq \sup_{\| \vx \|_{2}=1} \|\mA \vx \|_{2}$. 
For a positive semidefinite (psd) matrix $\mQ \in \mathbb{R}^{n \times n}$, with spectral decomposition 
$\mQ\!=\!\mU \mS \mU^{T}$ with the diagonal matrix $\mS = {\rm diag}  \{ s_{i} \}_{i=1}^{n}$. The square 
root of psd $\mQ$ is $\mQ^{1/2}\!\defeq\!\mU \mS^{1/2} \mU^{T}$ with $\mS^{1/2}\!\defeq\!{\rm diag} \{ \sqrt{s_{i}} \}_{\!i=\!1}^{n}$. 
For a given psd $\mathbf{Q}$ we define the norm $\| \vx \|_{\mathbf{Q}} \defeq \sqrt{ \vx^{T} \mQ \vx}$. 

The subdifferential of a function $g(\vx)$ at $\vx_{0}\!\in\!\mathbb{R}^{n}$ is 
\begin{equation} 
\partial g(\vx_{0})\!\defeq\!\{ \vy\!\in\!\mathbb{R}^{n}\!:\!g(\vx)\!\geq\!g(\vx_{0})\!+\!\vy^{T}(\vx\!-\!\vx_{0}) \mbox{ for any } \vx \}, \nonumber 
\end{equation}  
and its convex conjugate function is defined as \cite{BoydConvexBook}
\begin{equation}
\label{equ_def_convex_conjugate}
g^{*}(\hat{\vy}) \defeq \sup_{\vy \in \mathbb{R}^{n}} \vy^{T}\hat{\vy}- g( \vy). 
\end{equation} 

\subsection{The Empirical Graph}
\label{sec_emp_graph}

Consider a dataset of $\signalsize$ data points (a graph signal) that can be represented as 
supported at the nodes of a simple undirected weighted graph $\graph = (\nodes, \edges, \mathbf{W})$, 
where $\nodes$ are nodes, $\edges$ are edges and $\mathbf{W}$ are edge weights. Following 
\cite{SemiSupervisedBook}, we refer to the graph $\graph$ as the empirical graph associated with 
the dataset. 

The nodes $i\!\in\!\nodes\!=\!\{1,\ldots,\signalsize\}$ of the empirical graph $\graph$ represent the 
$\signalsize$ individual data points. In many applications, the goal is to determine (or infer) some relevant property 
encoded as a numeric label $x_{i}$ associated with the node $i \in \nodes$. The labels could represent instantaneous 
amplitudes of an audio signal, the greyscale values of image pixels, or the probabilities of social network members taking a particular action. 
The labels $x_{i}$ define a graph signal $\vx = (x_{1},\ldots,x_{\signalsize})^{T} \in \mathbb{R}^{\signalsize}$ over the empirical 
graph with the signal value at node $i$ given by the label $x_{i}$. 

The undirected edges $\{i,j\}\!\in\!\edges$ of the empirical graph $\graph$ connect data points 
which are considered similar (in some domain-specific sense). It will be convenient to represent 
the edges by the numbers $\{1,\ldots,E=|\edges|\}$. 
 
For an edge $\{i,j\}\!\in\!\edges$, the nonzero value $W_{i,j}\!>\!0$ represents the strength of the connection 
$\{i,j\}\!\in\!\edges$. The edge set $\edges$ is encoded in the non-zero pattern of the weight matrix 
$\mathbf{W}\!\in\!\mathbb{R}^{\signalsize\times \signalsize}$,  
\begin{equation}
\label{equ_edge_set_support_weights}
\{ i , j \} \in \edges \mbox{ if and only if } W_{i,j}  > 0.  
\end{equation} 

The neighborhood $\mathcal{N}(i)$ and weighted degree (strength) $d_{i}$ of node $i \in \nodes$ are defined, respectively, as 
\begin{equation} 
\label{equ_def_neighborhood}
\mathcal{N}(i) \defeq \{ j \in \nodes : \{i,j\} \!\in\!\edges \} \mbox{, } d_{i} \defeq \sum_{j \in \mathcal{N}(i)} W_{i,j}.
\end{equation} 
The maximum (weighted) node degree is
\begin{equation}
\label{equ_def_max_node_degree}
 d_{\rm max} \defeq \max_{i \in \mathcal{V}} d_{i} \stackrel{\eqref{equ_def_neighborhood}}{=} \max_{i \in \nodes} \sum_{j \in \mathcal{N}(i)} W_{i,j} . 
\end{equation}
Without loss of generality we consider only datasets whose empirical graph does not contain isolated nodes, i.e., we assume that $d_{i} > 0$ for every node $i \in \nodes$.

For a given undirected empirical graph $\graph=(\nodes,\edges,\mathbf{W})$, 
we orient the undirected edge $\{i,j\}$ by defining the head as $e^{+}\!=\!\min \{i,j\}$ 
and the tail as $e^{-}\!=\!\max \{i,j\}$. The undirected edge $\{i,j\}$ with nodes $i < j$ 
becomes the directed edge $(i,j)$. We use $\graph$ and $\edges$ 
to also denote the oriented empirical graph and its directed edges, respectively.  
The \emph{incidence matrix} $\mD \!\in\! \mathbb{R}^{\nredges \times \signalsize}$ of the empirical graph $\graph$ is 
\begin{equation}
D_{e,i} = \begin{cases}  W_{e} & \mbox{ if } i = e^{+}  \\ 
				    - W_{e} & \mbox{ if } i = e^{-}  \\ 
				    0 &  \mbox{ else.}  \label{equ_def_incidence_mtx}
				    \end{cases}
\end{equation} 
The rows of $\mD$ correspond to the edges $e\!\in\!\edges$ while the columns represent 
nodes $i\!\in\!\nodes$ of the empirical graph $\graph$. 
The row representing $e\!=\!\{i,j\}$ contains exactly 
two non-zero entries in the columns corresponding to the nodes $i,j \in \nodes$. It will be convenient to 
define the directed neighbourhoods (see \eqref{equ_def_neighborhood}) of a node $i \in \nodes$ as 
\begin{align} 
\mathcal{N}^{+}(i) & \defeq \{ j \in \nodes : \{i,j\} \!\in\!\edges, i < j \} \mbox{, and }  \nonumber \\ 
\mathcal{N}^{-}(i) & \defeq \{ j \in \nodes : \{i,j\} \!\in\!\edges, i > j \}. 
\end{align}

\subsection{Cluster Assumption}
\label{sec_cluster_assumption}

We assume that labels $x_{i}$ are known at only a few nodes $i \in \nodes$ of a (small) training 
set $\trainingset \subseteq \nodes$ (see Fig.\ \ref{fig_clustered_graph_signal}). Our goal 
is then to learn the unknown labels $x_{i}$ for all data points $i \in \nodes \setminus \trainingset$ outside 
the training set. This learning problem, which is known as SSL, translates to a graph signal recovery problem 
within our setting. 

Given the signal samples $x_{i}$ for data points $i \in \trainingset$ in the training set, we want to recover 
the entire graph signal $\vx \in \mathbb{R}^{\signalsize}$. This learning (or recovery) problem is feasible 
if the underlying graph signal $\vx$ has a known structure. As mentioned above, a particular structure is 
obtained if the labels $x_{i}$ conform with the cluster structure of the empirical graph $\graph$. Consider 
the graph signal $\vx \in \mathbb{R}^{\signalsize}$ constituted by the (mostly unknown) labels $x_{i}$ of the 
data points $i\!\in\!\nodes$. The cluster assumption requires similar signal values  $x_{i}\!\approx\!x_{j}$ 
at nodes $i,j\!\in\!\nodes$ in the same well-connected subset (cluster).

We measure the ``clusteredness'' of a graph signal $\vx$ using the weighted TV \cite{RudinNoise,Wang2016}
\begin{equation} 
\label{equ_def_TV}
\| \vx \|_{\rm TV} \defeq \sum_{\{i,j\} \in \edges} W_{i,j}  | x_{j}\!-\!x_{i}|.
\end{equation} 
As the notation indicates, $\| \vx \|_{\rm TV}$ defines a seminorm for graph signals $\vx$. It is only a seminorm 
since it is zero also for non-zero (but constant) graph signals.  
The incidence matrix $\mD$ \eqref{equ_def_incidence_mtx} of the (oriented) empirical graph $\graph$ allows us to represent the TV 
of a graph signal $\vx$ as   
\begin{equation}
\label{equ_repr_ell_1_TV}
\| \vx \|_{\rm TV}  = \| \mathbf{D} \vx \|_{1}. 
\end{equation} 
Using the TV \eqref{equ_def_TV} to guide learning (signal recovery) methods turns out to 
be useful statistically and computationally. Indeed, as we discuss below, minimizing TV results 
in labels (signals) which are constant over well-connected subsets (clusters) of data points. 
Moreover, TV minimization can be implemented as highly scalable message passing over 
the underlying empirical graph (see Alg.\ \ref{sparse_label_propagation_mp}).

The most simple model for graph signals conforming with the cluster assumption 
are piece-wise constant signals \cite{Chen2015}
\begin{equation}
\label{equ_def_clustered_signal_model}
\hspace*{-3mm}\xsigval{i}\!=\!\sum_{l=1}^{|\partition|} a_{l} \mathcal{I}_{\cluster_{l}}[i]  \mbox{ with } a_{l}\!\in\!\mathbb{R} \mbox{, } \mathcal{I}_{\cluster_{l}}[i]\!\defeq\!\begin{cases} 
1 \mbox{ for } i\!\in\!\cluster_{l} \\ 0 \mbox{ else.}  \end{cases}
\end{equation} 
The signal model \eqref{equ_def_clustered_signal_model} uses an arbitrary but fixed partition 
\begin{equation} 
\nonumber 
\partition=\big\{\cluster_{1},\ldots,\cluster_{|\partition|}\big\}
\end{equation}
constituted by disjoint clusters $\cluster_{l} \subseteq \nodes$ 
(see Fig.\ \ref{fig_clustered_graph_signal}). Our analysis will be applicable for an arbitrary choice for the 
partition underlying the signal model \eqref{equ_def_clustered_signal_model}. However, our results are 
most useful for partitions which consist of well connected clusters (see Definition \ref{def_sampling_set_resolves}). 

We emphasize that the learning algorithm we propose in Section \ref{sec_spl_Alg} does not require knowledge 
of the partition $\partition$ underlying the signal model \eqref{equ_def_clustered_signal_model}. The partition is 
only required for the analysis of the learning accuracy of this algorithm (see Section \ref{sec_main_results}). 

The signal model \eqref{equ_def_clustered_signal_model} is an idealization which crucially simplifies the analysis 
of the statistical properties of TV minimization (see Section \ref{sec_TV_min}). The graph signals arising in many 
applications will typically not be perfectly constant over clusters. However, Theorem \ref{main_thm_approx_sparse} 
remains useful as long as the data (labels) can be well approximated by a piece-wise constant graph signal \eqref{equ_def_clustered_signal_model}.  

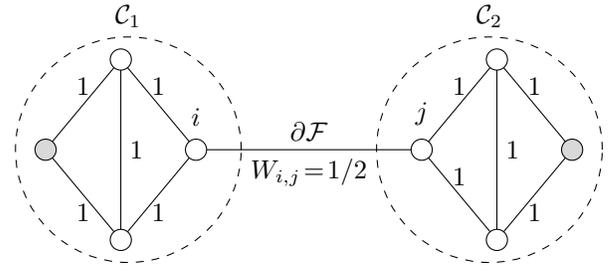
\begin{figure}[htbp]
\begin{center}
\begin{tikzpicture}
    \tikzset{x=1cm,y=1.2cm,every path/.style={>=latex},node style/.style={circle,draw}} 
    \coordinate[] (x2) at (0,0); 
    \coordinate[] (x10) at (-1,1);    
    \coordinate[] (x3) at (0,2);
    \coordinate[] (x4) at (1,1);  
    \coordinate[] (x12) at (2.5,1) ;   
    \draw[line width=0.4,-] (x2) edge  node [right] {$1$} (x3);
    \draw[line width=0.4,-] (x2) edge  node [below] {$1$} (x10);
    \draw[line width=0.4,-] (x3) edge   node [above] {$1$} (x10);
    \draw[line width=0.4,-] (x2) edge   node [below] {$1$} (x4);
    \draw[line width=0.4,-] (x3) edge   node [above] {$1$}(x4);
     \coordinate[] (x5) at (5,0); 
    \coordinate[] (x6) at (5,2);    
    \coordinate[] (x7) at (4,1);
    \coordinate[] (x11) at (6,1);          
    \draw[line width=0.4,-] (x5) edge node[right]{$1$} (x6);
        \draw[line width=0.4,-] (x11) edge node[above]{$1$}(x6);
            \draw[line width=0.4,-] (x5) edge  node [below] {$1$} (x11);
    \draw[line width=0.4,-] (x5) edge  node [above] {$1$} (x7);
    \draw[line width=0.4,-] (x6) edge  node [above] {$1$} (x7);
    \draw[line width=0.4,-] (x4) edge node [above] {$\partial \partition$}  node [below] {$W_{i,j}\!=\!1/2$}  (x7) ;
    \draw [fill=white]   (x2)  circle (4pt);
    \draw [fill=white]   (x3)  circle (4pt);
    \draw [fill=gray!30]   (x10)  circle (4pt);
    \draw [fill=white]   (x4)  circle (4pt) node [above=2mm] {$i$} ;
     \draw [fill=white]   (x5)  circle (4pt);
    \draw [fill=white]   (x6)  circle (4pt);
    \draw [fill=gray!30]   (x11)  circle (4pt);
    \draw [fill=white]   (x7)  circle (4pt) node [above=2mm] {$j$} ;
    \node[draw,circle,dashed,minimum size=3cm,inner sep=0pt,label={$\mathcal{C}_1$}] at (0.1,1) {};
    \node[draw,circle,dashed,minimum size=3cm,inner sep=0pt,label={$\mathcal{C}_2$}] at (4.9,1) {};
\end{tikzpicture}
\end{center}
\caption{\label{fig_clustered_graph_signal} Empirical graph $\graph$ whose nodes $\nodes$ are grouped into two clusters $\cluster_{1}$ and $\cluster_{2}$ 
forming the partition $\partition=\{\cluster_{1},\cluster_{2}\}$. The boundary of the partition is $\partial \partition = \{\{i,j\} \}$ having weight $W_{i,j}=1/2$. The 
edges $e \in \edges$ connecting nodes within the same cluster have weight $W_{e}=1$. The nodes belonging to the training set $\trainingset$ are shaded.}
\end{figure}

In Section \ref{sec_main_results} we characterize (see Definition \ref{def_sampling_set_resolves}) 
those partitions $\partition$, used in the model \eqref{equ_def_clustered_signal_model}, which allow for accurate 
recovery of a (approximately) piece-wise graph signal from its values $x_{i}$ at the nodes $i \in \trainingset$ of the 
training set. Our results indicate that piece-wise constant signals \eqref{equ_def_clustered_signal_model} can be 
learned accurately if the partition $\partition$ has a boundary with small weights. The boundary $\boundary$ of $\partition$ 
consists of the edges connecting nodes from different clusters, i.e., 
\begin{equation}
\boundary \defeq \{ \{i,j\} \in \edges \mbox{ with  } i \in \cluster_{l} \mbox{ and } j \in \cluster_{l'} \neq \cluster_{l} \}. \nonumber
\end{equation} 
The boundary $\boundary$ is the union of the cluster boundaries 
\begin{equation} 
\partial \cluster_{l} \defeq \{ \{i,j\} \in \edges \mbox{ with } i \in \cluster_{l} \mbox{ and } j \in \nodes \setminus \cluster_{l} \}.
\end{equation} 

Recovering a piece-wise constant graph signal \eqref{equ_def_clustered_signal_model} may seem trivial given the 
availability of efficient clustering methods \cite{Luxburg2007,Spielman_alocal,AbbeSBM2018}. Indeed, it is natural 
to first obtain the partition $\partition$ underlying \eqref{equ_def_clustered_signal_model} using some clustering method 
and then perform cluster-wise averaging in order to obtain an estimate for the coefficients $a_{l}$ in \eqref{equ_def_clustered_signal_model}. 
Despite the conceptual simplicity of this approach, it has some challenges. Most existing clustering methods  involve design 
parameters such as the number of clusters or distribution parameters of probabilistic (stochastic block) models. 
The proper choice (or learning) of these parameters can be non-trivial. Moreover, clustering methods do not exploit label information. 

In what follows, we show how the recovery problem lends naturally to a TV minimization problem which, in turn, 
can be solved by efficient convex optimization methods. The resulting algorithm (Alg.\ \ref{alg_sparse_label_propagation_centralized}) 
does not involve any design parameters and can be implemented as scalable message passing (Alg.\ \ref{sparse_label_propagation_mp}) 
on the empirical graph.

\subsection{TV Minimization}
\label{sec_TV_min} 

The TV of a piece-wise constant graph signal \eqref{equ_def_clustered_signal_model} is 
\begin{align}
\label{equ_bound_TV_norm_clustered}
\| \xsig \|_{\rm TV} & \stackrel{\eqref{equ_def_TV}}{=}  \sum_{\{i,j\} \in \edges} W_{i,j}  | \xsigval{j}\!-\!\xsigval{i}| \nonumber \\ 
&  \stackrel{\eqref{equ_def_clustered_signal_model}}{=}  \sum_{\{i,j\} \in \boundary } W_{i,j}  | \xsigval{j}\!-\!\xsigval{i}| \nonumber \\ 
&  \stackrel{\eqref{equ_def_clustered_signal_model}}{\leq}  \bigg(\sum_{\{i,j\} \in \boundary}  \hspace*{-2mm} W_{i,j}\bigg) \max_{l,l'\in\{1,\ldots,|\partition|\}} |a_{l}\!-\!a_{l'}| . 
\end{align} 
Thus, if the partition $\partition$ has a small weighted boundary $\sum_{\{i,j\} \in \boundary} W_{i,j}$, the graph signals 
\eqref{equ_def_clustered_signal_model} have a small TV $\| \vx \|_{\rm TV}$ due to \eqref{equ_bound_TV_norm_clustered}. 

A sensible strategy for learning a piece-wise constant graph signal is therefore via minimizing 
the TV $\| \tilde{\vx} \|_{\rm TV}$ among all graph signals which are consistent with 
the known labels $\{ x_{i} \}_{i \in \trainingset}$. This is formulated as the optimization problem  
\begin{align} 
\primslp &\!\in\!\argmin_{\tilde{\vx} \in \graphsigs} \underbrace{\sum_{\{i,j\} \in \edges} \hspace*{-3mm} W_{i,j}  | \tilde{x}_{j}\!-\!\tilde{x}_{i}|}_{=  \| \tilde{\vx} \|_{\rm TV}}
 \mbox{s.t. }  \tilde{x}_{i}\!=\!x_{i}  \mbox{ for all } i\!\in\!\trainingset \nonumber \\
& \stackrel{\eqref{equ_repr_ell_1_TV}}{=} 
\argmin_{\tilde{\vx} \in \graphsigs} \| \mathbf{D} \tilde{\vx} \|_{1} \quad \mbox{s.t.} \quad  \tilde{x}_{i}
\!=\!x_{i} \mbox{ for all } i\!\in\!\trainingset.  \label{equ_min_constr}
\end{align}

Since the objective function and the constraints in \eqref{equ_min_constr} are convex, the optimization 
problem \eqref{equ_min_constr} is a convex optimization problem \cite{BoydConvexBook}. In fact, 
\eqref{equ_min_constr} can be reformulated as a linear program \cite[Sec. 1.2.2]{BoydConvexBook}. 

The solution to \eqref{equ_min_constr} might not be unique.\footnote{Assume that no initial labels are 
available such that the training set $\trainingset$ would be empty. Then, every constant graph signal solves \eqref{equ_min_constr}.} 
Any such solution $\hat{\vx}$ is characterized by two properties: (i) it is consistent with the initial labels, i.e., $\hat{x}_{i}=x_{i}$ for all nodes $i\in \trainingset$ 
in the training set; and (ii) it has minimum TV among all such graph signals. 

We solve \eqref{equ_min_constr} using a recently proposed primal-dual method \cite{PrecPockChambolle2011}. 
This approach is appealing since it comes with a theoretical convergence 
guarantee and can be implemented efficiently as message passing over the underlying 
empirical graph (see Alg.\ \ref{sparse_label_propagation_mp} below). The resulting algorithm 
bears some similarity to the class of label propagation (LP) algorithms for SSL on graphs \cite{Anis2016ExpSampSet,Chen2015}. 
Indeed, LP algorithms can be interpreted as message passing methods for solving the 
optimization problem \cite[Chap 11.3.4.]{SemiSupervisedBook}: 
\begin{align}
\hat{\vx}^{(\rm LP)}&\!\in\!\argmin_{\tilde{\vx} \in \graphsigs} \sum_{\{i,j\} \in \edges} W^{2}_{i,j} (\tilde{x}_{i}\!-\!\tilde{x}_{j})^2\nonumber \\ 
  & \mbox{s.t.} \quad  \tilde{x}_{i}
\!=\!x_{i} \mbox{ for all } i\!\in\!\trainingset.  \label{equ_LP_problem}
\end{align} 

The learning problem \eqref{equ_LP_problem} amounts to minimizing the weighted sum of 
squared signal differences $(\tilde{x}_{i}-\tilde{x}_{j})^2$ over edges $\{i,j\} \in \edges$ in the 
empirical graph. In contrast, TV minimization \eqref{equ_min_constr} aims to minimize a 
weighted sum of absolute values of the signal differences $|\tilde{x}_{i}-\tilde{x}_{j}|$. It turns 
out that using the absolute values of the signal differences (the TV) instead of the sum of squared 
differences (as in LP) results in piece-wise constant graph signals (see \eqref{equ_def_clustered_signal_model}). 
In contrast, LP methods smooth out abrupt signal variations (see Section \ref{experimental_results}), 
making them unsuitable for data which can be (approximately) represented by piece-wise constant graph signals. 
LP methods have been shown to fail dramatically for random geometric graphs  \cite{NSZ09}. 

TV minimization \eqref{equ_min_constr} and LP \eqref{equ_LP_problem} are special cases of $p$-Laplacian 
minimization \cite{elalaoui16}
\begin{align}
\hat{\vx}^{(p)}&\!\in\!\argmin_{\tilde{\vx} \in \graphsigs} \sum_{\{i,j\} \in \edges} \bigg( W_{i,j} |\tilde{x}_{i}-\tilde{x}_{j}| \bigg)^{p}  \nonumber \\ 
& \mbox{s.t.} \quad  \tilde{x}_{i}\!=\!x_{i} \mbox{ for all } i\!\in\!\trainingset.  \label{equ_pLap_problem}
\end{align} 
Indeed, TV minimization \eqref{equ_min_constr} is obtained from \eqref{equ_pLap_problem} when $p\!=\!1$, while the 
LP problem \eqref{equ_LP_problem} is obtained when $p\!=\!2$. The limiting case of \eqref{equ_pLap_problem} for 
$p\rightarrow \infty$, known as the \emph{minimal Lipschitz extension problem}, is studied in \cite{kyng2015algorithms}. 
The work \cite{kyng2015algorithms} presents efficient solvers and proves stability of the solutions for \eqref{equ_pLap_problem} 
in this limiting case. However, while the algorithms in \cite{kyng2015algorithms} have high (combinatorial) complexity, 
we can solve TV minimization using efficient convex optimization methods (see Section \ref{sec_spl_Alg}). 

The TV minimization problem \eqref{equ_min_constr} is also closely related to graph trend filtering \cite{Wang2016} 
and the more general network Lasso (nLasso) \cite{NetworkLasso,WhenIsNLASSO}
\begin{equation}
\hat{\vx}^{(\rm nL)} \!\in\!  \argmin_{\tilde{\vx} \in \graphsigs} \sum_{i \in \trainingset}  (\tilde{x}_{i}\!-\!x_{i})^2 \!+\! \lambda \| \tilde{\vx}\|_{\rm TV} \label{equ_nLasso}.
\end{equation} 
By Lagrangian duality \cite{BertsekasNonLinProgr,BoydConvexBook}, there are values (which might depend on the initial labels $x_{i}$) 
for $\lambda$ in \eqref{equ_nLasso} such that solutions of \eqref{equ_nLasso} coincide with those of \eqref{equ_min_constr}. The tuning 
parameter $\lambda\!>\!0$ in \eqref{equ_nLasso} allows us to trade a small empirical error $\sum_{i \in \trainingset}  (\hat{x}^{(\rm nL)}_{i}\!-\!x_{i})^2$ against a small TV $\|\hat{\vx}^{(\rm nL)}\|_{\rm TV}$ 
of the learned graph signal $\hat{\vx}^{(\rm nL)}$. Choosing a large value of $\lambda$ enforces a small TV of the learned graph signal. 
Using a small value for $\lambda$ puts more emphasis on the empirical error. In contrast to nLasso \eqref{equ_nLasso}, TV minimization \eqref{equ_min_constr} 
does not require any parameter tuning.

\section{The Dual of TV Minimization}
\label{sec_dual_TV_Min}

TV minimization \eqref{equ_min_constr} involves non-differentiable objective function, 
which rules out gradient (descent) methods.  However, both the objective function and 
the constraint set of \eqref{equ_min_constr}  have a simple structure individually. This 
compositional structure of \eqref{equ_min_constr} can be exploited by studying an 
equivalent dual problem. It turns out that this dual problem has an interpretation as 
network (flow) optimization \cite{BertsekasNetworkOpt}.  Moreover, by jointly considered the 
primal TV minimization \eqref{equ_min_constr} and its dual we obtain an efficient method for 
simultaneously solving TV minimization \eqref{equ_min_constr} and its dual (see Section \ref{sec_spl_Alg}). 

In order to formulate the dual problem we first reformulate TV minimization \eqref{equ_min_constr} 
as an equivalent unconstrained convex optimization problem 
\begin{align}
\label{equ_min_constr_unconstr}
\primslp & \!\in\! \argmin_{\tilde{\vx} \in \mathbb{R}^{\signalsize}} f(\tilde{\vx}) \defeq g(\mD \tilde{\vx}) + h(\tilde{\vx}),
\end{align}
with 
\begin{equation}
\nonumber
 g(\vy) \defeq \| \vy\|_{1} \mbox{, and } h(\tilde{\vx}) \defeq  \begin{cases} \infty  \mbox{ if } \tilde{\vx} \notin \mathcal{Q} \\ 0 \mbox{ if } \vx \in \mathcal{Q}.\end{cases}
 \end{equation} 
The constraint set $\mathcal{Q} = \{ \tilde{\vx} \in \mathbb{R}^{\signalsize} : \tilde{x}_{i} = x_{i} \mbox{ for all } i \in \trainingset \}$ 
collects all graph signals which match the labels $x_{i}$ on the training set $\trainingset$. 
The (extended-value) function $h(\vx)$ in \eqref{equ_min_constr_unconstr} is the indicator function of the convex set $\mathcal{Q}$ (see \cite{BoydConvexBook}).  

We can view \eqref{equ_min_constr_unconstr} as the primal problem (or formulation) of TV minimization \eqref{equ_min_constr}. 
The dual problem associated with TV minimization is 
\begin{align} 
\label{equ_dual_SLP}
\dualslp & \!\in\! \argmaximum_{\vy \in \mathbb{R}^{\nredges}} \tilde{f}(\vy) \defeq -h^{*}(-\mD^{T} \vy) - g^{*}(\vy).
\end{align} 
The objective function $\tilde{f}(\vy)$ of the dual problem \eqref{equ_dual_SLP} is composed of the convex conjugates 
(see \eqref{equ_def_convex_conjugate}) of the components $h(\vx)$ and $g(\vy)$ of the primal problem \eqref{equ_min_constr_unconstr}. 
These convex conjugates are given explicitly by 
\begin{align} 
\label{equ_conv_conj_h}
h^{*} (\tilde{\vx}) & = \sup_{\vz \in \mathbb{R}^{\signalsize}}  \vz^{T} \tilde{\vx}- h(\vz) \nonumber  \\
& \hspace*{-15mm} \stackrel{\eqref{equ_min_constr_unconstr}}{=} \begin{cases} \infty &\mbox{ if } \tilde{x}_{i} \neq 0 \mbox{ for some } i\!\in\!\nodes \setminus \trainingset \\ \sum_{i \in \trainingset} \tilde{x}_{i} x_{i} & \mbox{ otherwise,} \end{cases}
\end{align} 
and 
\begin{align} 
\label{equ_conv_conj_g}
g^{*}(\vy)& = \sup_{\vz \in \mathbb{R}^{\nredges}}  \vz^{T} \vy - g(\vz)  \stackrel{\eqref{equ_min_constr_unconstr}}{=}
 \sup_{\vz \in \mathbb{R}^{\nredges}}  \vz^{T} \vy - \|\vz\|_{1} \nonumber  \\
& \stackrel{}{=} \begin{cases}  \infty & \mbox{ if } \| \vy \|_{\infty} > 1 \\ 0 & \mbox{ otherwise.} \end{cases}
\end{align} 

The relation between the primal problem \eqref{equ_min_constr_unconstr} and the dual problem \eqref{equ_dual_SLP} 
is made precise in \cite[Thm.\ 31.3]{RockafellarBook}. First, the optimal values of \eqref{equ_min_constr_unconstr} 
and \eqref{equ_dual_SLP} coincide: 
\begin{equation} 
\label{equ_zero_duality_gap}
\min_{\tilde{\vx} \in \mathbb{R}^{\signalsize}}  g(\mD \tilde{\vx}) + h(\tilde{\vx}) = \max_{\vy \in \mathbb{R}^{\nredges}}-h^{*}(-\mD^{T} \vy) - g^{*}(\vy). 
\end{equation}
The identity \eqref{equ_zero_duality_gap} is useful for bounding the sub-optimality $\| \tilde{\vx} \|_{\rm TV} - \| \hat{\vx} \|_{\rm TV}$ of a given 
candidate $\tilde{\vx}$ for the solution $\hat{\vx}$ to the TV minimization \eqref{equ_min_constr}. According to \eqref{equ_zero_duality_gap}, 
given any (dual) vector $\vy \in \mathbb{R}^{\nredges}$, we can bound the sub-optimality as 
\begin{equation}
\label{equ_upper_bound_subopt}
\| \tilde{\vx} \|_{\rm TV} - \| \hat{\vx} \|_{\rm TV} \leq  \|  \tilde{\vx} \|_{\rm TV} + \big( h^{*}(-\mD^{T} \vy) + g^{*}(\vy) \big). 
\end{equation} 
Another consequence of the duality result \cite[Thm.\ 31.3]{RockafellarBook} is a powerful characterization of the solutions of the primal \eqref{equ_min_constr_unconstr} 
and dual problem \eqref{equ_dual_SLP}. In particular, a pair of vectors $\hat{\vx} \in \graphsigs, \dualslp \in \mathbb{R}^{\nredges}$ 
are solutions to the primal \eqref{equ_min_constr_unconstr} and dual problem \eqref{equ_dual_SLP}, respectively, if and only if 
\begin{equation}
\label{equ_two_coupled_conditions}
-(\mD^{T} \dualslp) \in \partial h(\primslp) \mbox{, } \mD \primslp \in \partial g^{*}(\dualslp) . 
\end{equation} 
Given any solution $\dualslp \in \mathbb{R}^{\nredges}$ to the dual problem \eqref{equ_dual_SLP}, 
any solution $\hat{\vx}$ to the primal problem \eqref{equ_min_constr_unconstr} and, in turn, to TV minimization 
\eqref{equ_min_constr} must be such that conditions \eqref{equ_two_coupled_conditions} are satisfied. The optimality 
condition \eqref{equ_two_coupled_conditions} is the launching point for a primal-dual method for solving \eqref{equ_min_constr} (see Section \ref{sec_spl_Alg}). 

It turns out that the dual \eqref{equ_dual_SLP} of TV minimization \eqref{equ_min_constr} is an instance of 
network optimization for the empirical graph $\graph$. To show this, we need the following definition.
\begin{definition}
A network flow $f\!:\!\edges\!\rightarrow\!\mathbb{R}$ with supplies $v_{i}$, at the nodes $i\!\in\!\nodes$, assigns each 
directed edge $e\!=\!(i,j)\!\in\!\edges$ some value $f_{e}\!\in\!\mathbb{R}$. 
The flow has to satisfy the conservation law: 
\begin{equation} 
\label{equ_conservation_law}
\hspace*{-3mm}\sum_{j \in \mathcal{N}^{+}(i)} \hspace*{-2mm}f_{(i,j)}\!-\!\sum_{j \in \mathcal{N}^{-}(i)} \hspace*{-2mm} f_{(j,i)} = v_{i} \mbox{ for each } i\!\in\!\nodes.
\end{equation} 
\end{definition}
For a given empirical graph $\graph=(\nodes,\edges,\mW)$, we will consider flows that 
satisfy the capacity constraints: 
\begin{equation}
\label{equ_def_cap_constraints}
|f_{e}| \leq W_{e}
\end{equation} 
for some edges $e \in \edges$. Thus, we interpret the weights $W_{e}$ of the empirical graph 
as capacities of a flow network. At a later point, we will make explicit those edges for which the capacity 
constraints \eqref{equ_def_cap_constraints} has be satsified. 

We can associate any dual vector $\vy \in \mathbb{R}^{\nredges}$ with a particular flow $f^{(y)}$ whose 
values are given by $f^{(y)}_{e} \defeq W_{e} y_{e} $. It is then easy to verify that the flow $f^{(y)}$ satisfies 
the capacity constraints \eqref{equ_def_cap_constraints} and the conservation law \eqref{equ_conservation_law} with supplies 
$v_{i}$ if and only if 
\begin{equation} 
\label{equ_cond_flow_vector}
\hspace*{-3mm}\| \vy \|_{\infty}\!\leq\!1  \mbox{, } \mathbf{D}^{T} \vy\!=\!\mathbf{v} \mbox{ with }\mathbf{v}\!=\!(v_{1},\ldots,v_{\signalsize})^{T}\!\in\!\mathbb{R}^{\signalsize}. 
\end{equation} 
Thus, the magnitude $|y_{e}|$ of a dual vector entry represents the 
fraction of the edge capacity $W_{e}$ flowing through edge $e\!\in\!\edges$. 
\begin{proposition}
\label{prop_dual_TV_min_flow}
The dual problem \eqref{equ_dual_SLP} of TV minimization \eqref{equ_min_constr} is equivalent to the network optimizaton 
problem 
\begin{equation}
\label{equ_network_flow}
\max_{f \in \mathcal{R}} \sum_{i \in \trainingset}  x_{i} \sum_{j \in \mathcal{N}(i)} f_{(i,j)}, 
\end{equation}
with the constraint set $\mathcal{R}$ consisting of all flows that conform with \eqref{equ_def_cap_constraints} 
and \eqref{equ_conservation_law} with supplies $v_{i}$ satisfying 
\begin{equation}
\label{equ_cond_demand}
v_{i} = 0 \mbox{ for all unlabeled nodes } i\!\in\!\nodes\!\setminus\!\trainingset.
\end{equation}
In particular, $\vy$ solves \eqref{equ_dual_SLP} if and only if the flow $f^{(y)}$, defined 
edge-wise by $f^{(y)}_{e} = W_{e} y_{e}$, solves \eqref{equ_network_flow}. 
\end{proposition}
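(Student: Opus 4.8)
The plan is to compute the dual objective $\tilde{f}(\vy)$ in closed form by substituting the explicit convex conjugates \eqref{equ_conv_conj_h} and \eqref{equ_conv_conj_g}, and then to recognize the resulting constrained maximization as the network flow problem \eqref{equ_network_flow} via the correspondence $f^{(y)}_{e} = W_{e} y_{e}$.

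First I would substitute $g^{*}$ from \eqref{equ_conv_conj_g}: since $g^{*}(\vy) = \infty$ whenever $\| \vy \|_{\infty} > 1$, the supremum in \eqref{equ_dual_SLP} effectively ranges only over $\vy$ with $\| \vy \|_{\infty} \leq 1$, on which $g^{*}(\vy) = 0$. Next I would substitute $h^{*}$ evaluated at $-\mD^{T} \vy$ from \eqref{equ_conv_conj_h}: this term equals $-\infty$ unless $(\mD^{T} \vy)_{i} = 0$ for every unlabeled node $i \in \nodes \setminus \trainingset$, and on that feasible set the two sign reversals cancel to give $-h^{*}(-\mD^{T} \vy) = \sum_{i \in \trainingset} (\mD^{T} \vy)_{i} x_{i}$. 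Hence the dual \eqref{equ_dual_SLP} is exactly the problem of maximizing $\sum_{i \in \trainingset} (\mD^{T} \vy)_{i} x_{i}$ subject to $\| \vy \|_{\infty} \leq 1$ and $(\mD^{T} \vy)_{i} = 0$ for all $i \in \nodes \setminus \trainingset$.

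I would then read off the flow interpretation using the correspondence already established in \eqref{equ_cond_flow_vector}. Writing $v_{i} := (\mD^{T} \vy)_{i}$, the constraint $\| \vy \|_{\infty} \leq 1$ becomes, under $f^{(y)}_{e} = W_{e} y_{e}$, precisely the capacity constraint $|f^{(y)}_{e}| \leq W_{e}$ of \eqref{equ_def_cap_constraints}, while $\mD^{T} \vy = \mathbf{v}$ encodes the conservation law \eqref{equ_conservation_law}; the requirement that $v_{i} = 0$ for unlabeled $i$ is then exactly the supply condition \eqref{equ_cond_demand}. To match the objective I would invoke the conservation law itself: from the definition \eqref{equ_def_incidence_mtx} of $\mD$ together with the antisymmetric flow convention $f_{(i,j)} = -f_{(j,i)}$, one obtains $\sum_{j \in \mathcal{N}(i)} f^{(y)}_{(i,j)} = \sum_{j \in \mathcal{N}^{+}(i)} f^{(y)}_{(i,j)} - \sum_{j \in \mathcal{N}^{-}(i)} f^{(y)}_{(j,i)} = v_{i}$, so that $\sum_{i \in \trainingset} (\mD^{T} \vy)_{i} x_{i} = \sum_{i \in \trainingset} x_{i} \sum_{j \in \mathcal{N}(i)} f^{(y)}_{(i,j)}$, which is the objective of \eqref{equ_network_flow}.

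Finally, since every edge weight is positive ($W_{e} > 0$), the map $\vy \mapsto f^{(y)}$ is a bijection of $\mathbb{R}^{\nredges}$ onto itself that carries the dual feasible set onto the flow feasible set $\mathcal{R}$ and preserves the objective value. It follows immediately that the two problems share the same optimal value and that $\vy$ solves \eqref{equ_dual_SLP} if and only if $f^{(y)}$ solves \eqref{equ_network_flow}. The only delicate point is bookkeeping: I must track the two sign reversals in the $h^{*}$ term and handle the edge orientation carefully, in particular justifying the antisymmetric extension $f_{(i,j)} = -f_{(j,i)}$ that is needed to rewrite the node-wise net outflow over the full neighborhood $\mathcal{N}(i)$ as the supply $v_{i}$. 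Everything else reduces to direct substitution.
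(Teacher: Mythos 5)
Your proposal is correct and follows essentially the same route as the paper's own proof: both substitute the explicit conjugates \eqref{equ_conv_conj_h} and \eqref{equ_conv_conj_g} to see that the dual objective implicitly enforces the capacity and conservation constraints of \eqref{equ_cond_flow_vector} with supplies \eqref{equ_cond_demand}, and then identify objectives under the scaling $f^{(y)}_{e} = W_{e} y_{e}$. In fact your write-up is more careful than the paper's two-sentence argument, since you make explicit the sign bookkeeping in $-h^{*}(-\mD^{T}\vy)$ and the antisymmetric convention $f_{(i,j)} = -f_{(j,i)}$ needed to rewrite $(\mD^{T}\vy)_{i}$ as the net outflow $\sum_{j \in \mathcal{N}(i)} f_{(i,j)}$ appearing in \eqref{equ_network_flow}.
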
 
\begin{proof}
The (extended-value) functions \eqref{equ_conv_conj_h} and \eqref{equ_conv_conj_g}, which 
constitute the dual problem \eqref{equ_dual_SLP}, implicitly constrain the dual vector $\vy$ to 
satisfy \eqref{equ_cond_flow_vector} with supplies of the form \eqref{equ_cond_demand}. Thus, 
any optimal dual vector $\widehat{\vy}$ induces a flow $f^{(\hat{y})} \in \mathcal{R}$. For any 
$\vy\!\in\!\mathbb{R}^{\nredges}$ such that the flow $f^{(y)}$ belongs to $\mathcal{R}$, the 
objective functions in \eqref{equ_network_flow} and \eqref{equ_dual_SLP} coincide. 
\end{proof} 
The problem \eqref{equ_network_flow} is an instance of a minimum-cost flow problem discussed in 
\cite[Ch.\ 1]{BertsekasNetworkOpt}. Various methods for solving minimum-cost flow problems are 
presented in \cite{BertsekasNetworkOpt}. 

Combining Proposition \ref{prop_dual_TV_min_flow} with the primal-dual optimality condition \eqref{equ_two_coupled_conditions} 
provides a characterization of the solutions of TV minimization in terms of particular network flows. 
\begin{corollary}
\label{cor_flow_satur_constant}
Given networked data with empirical graph $\graph$ and labels $\{x_{i} \}_{i \in \trainingset}$, 
consider some flow $\hat{f}$ which solves the minimum-cost flow problem \eqref{equ_network_flow}. 
Let us denote the set of edges which are not saturated in $\hat{f}$ by 
\begin{equation} 
\nonumber 
\mathcal{U}  \defeq \{ \{i,j\} \in \edges: | \hat{f}_{e} | < W_{e} \}. 
\end{equation} 
Then, any solution $\hat{\vx}$ of \eqref{equ_min_constr} satisfies $\hat{x}_{i}  = \hat{x}_{j}$ for each $e=\{i,j\} \in \mathcal{U}$. 
Thus, given some optimal flow $\hat{f}$ (which solves \eqref{equ_network_flow}), any solution to TV minimization is 
constant along edges which are not statured by $\hat{f}$. 
\end{corollary}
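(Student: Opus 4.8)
The plan is to combine the flow-to-dual correspondence of Proposition \ref{prop_dual_TV_min_flow} with the primal-dual extremality relations \eqref{equ_two_coupled_conditions}, reading off information about every primal solution from the saturation pattern of the optimal flow $\hat{f}$.

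First I would convert the given optimal flow into an optimal dual vector. Since $\hat{f}$ solves the network optimization problem \eqref{equ_network_flow}, Proposition \ref{prop_dual_TV_min_flow} tells us that the vector $\hat{\vy}$ defined edge-wise by $\hat{y}_{e}\!\defeq\!\hat{f}_{e}/W_{e}$ (well-defined because $W_{e}\!>\!0$) solves the dual problem \eqref{equ_dual_SLP}. The defining property of $\mathcal{U}$ then translates directly into a strict bound on the corresponding dual entries: for each $e\!=\!\{i,j\}\!\in\!\mathcal{U}$ we have $|\hat{y}_{e}|\!=\!|\hat{f}_{e}|/W_{e}\!<\!1$.

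Next I would fix an arbitrary solution $\hat{\vx}$ of \eqref{equ_min_constr} (equivalently of \eqref{equ_min_constr_unconstr}). Because $\hat{\vx}$ solves the primal and $\hat{\vy}$ solves the dual, the ``if and only if'' characterization \eqref{equ_two_coupled_conditions} guarantees that the pair $(\hat{\vx},\hat{\vy})$ satisfies the extremality relations; in particular $\mD \hat{\vx}\!\in\!\partial g^{*}(\hat{\vy})$. Since $g^{*}$ is the indicator of the unit $\ell_{\infty}$-ball by \eqref{equ_conv_conj_g}, its subdifferential at $\hat{\vy}$ is the normal cone of that ball at $\hat{\vy}$. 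The decisive observation is that any edge $e$ with $|\hat{y}_{e}|\!<\!1$ lies in the interior of the ball along coordinate $e$, so that coordinate of the normal cone is $\{0\}$; hence $(\mD \hat{\vx})_{e}\!=\!0$ for every $e\!\in\!\mathcal{U}$.

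Finally I would unwind the incidence-matrix definition \eqref{equ_def_incidence_mtx}: for $e\!=\!\{i,j\}$ one has $(\mD \hat{\vx})_{e}\!=\!W_{e}(\hat{x}_{e^{+}}\!-\!\hat{x}_{e^{-}})$, and since $W_{e}\!>\!0$ the vanishing of this entry forces $\hat{x}_{i}\!=\!\hat{x}_{j}$. Applying this to each $e\!\in\!\mathcal{U}$ yields the claim. The only step requiring genuine care is the middle one, namely correctly identifying $\partial g^{*}(\hat{\vy})$ as a normal cone and arguing coordinate-wise that strict feasibility $|\hat{y}_{e}|\!<\!1$ forces the matching entry of $\mD \hat{\vx}$ to vanish; equivalently, one may invoke the inverse-subdifferential identity $\partial g^{*}\!=\!(\partial g)^{-1}$ together with the fact that $\hat{\vy}\!\in\!\partial \| \cdot \|_{1}(\mD \hat{\vx})$ pins $|\hat{y}_{e}|\!=\!1$ whenever $(\mD \hat{\vx})_{e}\!\neq\!0$. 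I expect this to be the main (if modest) obstacle, with the remaining steps being routine.
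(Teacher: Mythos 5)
Your proposal is correct and follows essentially the same route as the paper's own proof: you convert the optimal flow $\hat{f}$ into an optimal dual vector via Proposition \ref{prop_dual_TV_min_flow}, invoke the right-hand optimality condition in \eqref{equ_two_coupled_conditions}, and conclude $(\mD \hat{\vx})_{e}=0$ on unsaturated edges from the structure of $\partial g^{*}$. Your coordinate-wise normal-cone argument (equivalently, the inverse-subdifferential identity) is precisely the detail the paper delegates to \cite[Sec. 32]{RockafellarBook}, so there is nothing substantively different here.
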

\begin{proof}
For the optimal flow $\hat{f}$ define the dual vector $\widehat{\vy} = \hat{f}_{e}/W_{e}$. 
According to Proposition \ref{prop_dual_TV_min_flow}, $\widehat{\vy}$ is a solution to the dual problem \eqref{equ_dual_SLP}. 
For this particular (optimal) dual vector $\widehat{\vy}$, any solution $\hat{\vx}$ to TV minimization has to satisfy 
the optimality condition \eqref{equ_two_coupled_conditions}. Using the right-hand condition in \eqref{equ_two_coupled_conditions} 
and the properties of the sub-differential $\partial g^{*}(\vy)$ (see \eqref{equ_conv_conj_g} and \cite[Sec. 32]{RockafellarBook}) 
yields the statement.
\end{proof}
Note that, for a particular edge $e=\{i,j\} \in \edges$ in the empirical graph, 
once we find at least one optimal flow $\hat{f}$ such that $|\hat{f}_{e}| < W_{e}$ we are 
assured that every solution to TV minimization is constant along that edge $e$. However, 
to apply Corollary \ref{cor_flow_satur_constant} we need an efficient means to 
construct or characterize flows which are optimal in the sense of \eqref{equ_network_flow}. 
While there exist some well-known methods for solving minimum-cost flow problems (see \cite{BertsekasNetworkOpt}), 
we consider Corollary \ref{cor_flow_satur_constant} mainly useful for (partially) characterizing the solutions of TV minimization. 
In order to actually solve TV minimization we will apply a different method 
which starts directly from the optimality conditions \eqref{equ_two_coupled_conditions}. 

\section{A Primal-Dual Method}
\label{sec_spl_Alg}

The solutions $\primslp$ of \eqref{equ_min_constr_unconstr} are characterized by 
\cite{RockafellarBook}
\begin{equation}
\label{equ_zero_subgradient}
\mathbf{0} \in \partial f(\primslp). 
\end{equation} 
Proximal methods solve \eqref{equ_min_constr_unconstr} via fixed-point iterations of an operator 
$\mathcal{P}$ whose fixed-points are the solutions $\hat{\vx}$ of \eqref{equ_zero_subgradient}, 
\begin{equation}
\label{equ_zero_subgradient_equ}
\mathbf{0} \in \partial f(\hat{\vx}) \mbox{ if and only if } \hat{\vx} = \mathcal{P} \hat{\vx}. 
\end{equation} 
In general, the operator $\mathcal{P}$ is not unique, i.e., there are different choices for $\mathcal{P}$ such that 
\eqref{equ_zero_subgradient_equ} is valid. These choices result in different proximal algorithms \cite{ProximalMethods}. One useful choice for $\mathcal{P}$ 
in \eqref{equ_zero_subgradient_equ} is suggested by the characterization \eqref{equ_two_coupled_conditions} 
of solutions to the primal \eqref{equ_min_constr_unconstr} and dual \eqref{equ_dual_SLP} form of TV minimization \eqref{equ_min_constr}. 
The resulting method has been presented in \cite[Alg.\ 1]{ComplexitySLP2018}. 

Let us detail the derivation of \cite[Alg.\ 1]{ComplexitySLP2018} which is re-stated as Alg.\ \ref{alg_sparse_label_propagation_centralized} below. 
Rewrite the two coupled conditions \eqref{equ_two_coupled_conditions} as 
\begin{align}
\primslp - {\bm \Gamma} \mD^{T} \dualslp & \in \primslp+ {\bm \Gamma} \partial h(\primslp)  \nonumber \\ 
2 {\bm \Lambda}  \mD \primslp +\dualslp  & \in {\bm \Lambda} \partial g^{*}(\dualslp)+ {\bm \Lambda} \mD\primslp+\dualslp,\label{equ_manipulated_coupled_conditions} 
\end{align}
with the invertible diagonal matrices (cf.\ \eqref{equ_edge_set_support_weights} and \eqref{equ_def_neighborhood})
\begin{align} 
{\bf \Lambda} & \defeq (1/2) {\rm diag} \{ \lambda_{\{i,j\}} = 1/W_{i,j} \}_{\{i,j\} \in \edges} \in \mathbb{R}^{\nredges \times \nredges} \mbox{ and }  \nonumber \\ 
{\bf \Gamma} & \defeq (1/2)  {\rm diag} \{ \gamma_{i} = 1/d_{i} \}_{i = 1}^{\signalsize} \in \mathbb{R}^{\signalsize \times \signalsize}.\label{equ_def_scaling_matrices}
\end{align}
The particular choice \eqref{equ_def_scaling_matrices} ensures that \cite[Lemma 2]{PrecPockChambolle2011}
\begin{equation}
\| {\bf \Gamma}^{1/2} \mD^{T} {\bf \Lambda}^{1/2} \|_{2} < 1, \nonumber
\end{equation}
which, in turn, guarantees convergence of the iterative algorithm we propose for solving \eqref{equ_min_constr_unconstr}. 

Using the concept of resolvent operators \cite[Sec. 1.1.]{PrecPockChambolle2011}, we further develop the characterization \eqref{equ_manipulated_coupled_conditions} of solutions $\hat{\vx}$ 
to TV minimization \eqref{equ_min_constr}. To this end we define the resolvent operators for the (set-valued) operators ${\bf \Lambda}  \partial g^{*}(\vy)$ and 
${\bm \Gamma} \partial h(\vx)$ (see \eqref{equ_min_constr_unconstr}) as 
\begin{align}
(\mathbf{I}\!+\!{\bf \Lambda} \partial g^{*})^{-1} (\vy) & \!\defeq\! \argmin\limits_{\vz \in \edgesigs} g^{*}(\vz)\!+\!(1/2) \| \vy\!-\!\vz \|_{{\bm \Lambda}^{-1}}^{2} \nonumber \\ 
(\mathbf{I}\!+\!{\bm \Gamma} \partial h)^{-1} (\vx) & \!\defeq\! \argmin\limits_{\vz \in \graphsigs} h(\vz)\!+\!(1/2) \| \vx\!-\!\vz\|_{{\bm \Gamma}^{-1}}^{2}. \label{equ_iterations_number_112}
\end{align}
Applying  \cite[Prop. 23.2]{Bauschke:2017} and \cite[Prop. 16.44]{Bauschke:2017} to the optimality condition \eqref{equ_manipulated_coupled_conditions} 
yields the equivalent condition (for $\primslp$, $\dualslp$ to be primal and dual optimal) 
\begin{align}
\primslp &= (\mathbf{I}\!+\!{\bm \Gamma} \partial h)^{-1} (\primslp\!-\!{\bm \Gamma} \mD^{T} \dualslp) \nonumber \\ 
\dualslp\!-\!2(\mathbf{I}\!+\!{\bf \Lambda}  \partial g^{*})^{-1}   {\bf \Lambda}  \mD \primslp& = (\mathbf{I}\!+\!{\bf \Lambda}  \partial g^{*})^{-1}(\dualslp\!-\!{\bf \Lambda}\mD\primslp).\label{equ_condition_fix_point}
\end{align}  
The characterization \eqref{equ_condition_fix_point} of the solution $\primslp \in \graphsigs$ for the TV minimization problem 
\eqref{equ_min_constr} leads naturally to the following coupled fixed-point iterations for finding a solution $\primslp$ of \eqref{equ_min_constr}: 
\begin{align}
\label{equ_fixed_point_iterations}  
\hat{\vy}^{(k+1)} &\defeq (\mathbf{I} + {\bf \Lambda}  \partial g^{*})^{-1}(\hat{\vy}^{(k)} +  {\bf \Lambda}  \mD(2\hat{\vx}^{(k)}- \hat{\vx}^{(k-1)}))\nonumber \\  
\hat{\vx}^{(k+1)} &\defeq (\mathbf{I} + {\bm \Gamma} \partial h)^{-1} (\hat{\vx}^{(k)} - {\bm \Gamma} \mD^{T} \hat{\vy}^{(k+1)}).
\end{align}  
Here, we used the diagonal matrices defined in \eqref{equ_def_scaling_matrices} as well as the incidence matrix $\mD$ (see \eqref{equ_def_incidence_mtx}). 
The fixed-point iterations \eqref{equ_fixed_point_iterations} are obtained as a special case of the iterations 
\cite[Eq. (4)]{PrecPockChambolle2011} when choosing $\theta\!=\!1$ (using the notation in \cite{PrecPockChambolle2011}). 

We implement the updates in \eqref{equ_fixed_point_iterations} by using simple closed-form expressions 
for the resolvent operators \eqref{equ_iterations_number_112} (see \cite[Sec. 6.2.]{pock_chambolle} for more details): 
\begin{align}
\label{equ_close_form_prox}
\hspace*{-3mm} (\mathbf{I}\!+\!{\bf \Lambda} \partial g^{*})^{-1} (\vy) &\!=\!(\tilde{y}_{1},\ldots,\tilde{y}_{N})^{T},\tilde{y}_{i}\!=\!y_{i}/\max \{ |y_{i}|, 1 \}  \nonumber \\ 
\hspace*{-3mm} (\mathbf{I}\!+\!{\bm \Gamma} \partial h)^{-1} (\tilde{\vx}) &\!=\!(t_{1},\ldots,t_{\signalsize})^{T}, t_{i}\!=\!\begin{cases} x_{i} & \hspace*{-3mm} \mbox{ for } i\!\in\!\trainingset \\ \tilde{x}_{i} & \hspace*{-3mm} \mbox{ otherwise.} \end{cases}
\end{align} 
Inserting \eqref{equ_close_form_prox} into the updates \eqref{equ_fixed_point_iterations} yields Alg.\ 
\ref{alg_sparse_label_propagation_centralized} for solving TV minimization \eqref{equ_min_constr}. 
Note that Alg.\ \ref{alg_sparse_label_propagation_centralized} is a special case of \cite[Alg.\ 1]{pock_chambolle} 
which uses a more general version of step $2$ in Alg.\ \ref{alg_sparse_label_propagation_centralized} of the 
form $\tilde{\vx}  \defeq \hat{\vx}^{(k)} +\theta( \hat{\vx}^{(k)} - \hat{\vx}^{(k\!-\!1)})$. Thus, step $2$ in Alg.\ 
\ref{alg_sparse_label_propagation_centralized} is obtained for the particular choice $\theta=1$. 
This choice ensures convergence of Alg.\ \ref{alg_sparse_label_propagation_centralized} with an optimal 
(worst-case) converge rate (see \cite{ComplexitySLP2018}). The tuning of $\theta$ is beyond the cope of this paper. 
Another difference between Alg.\ \ref{alg_sparse_label_propagation_centralized} and \cite[Alg.\ 1]{pock_chambolle} 
is the explicit computation of the running average in step $8$ (which is required for the convergence analysis 
underlying Proposition \ref{thm_conv_result}). 


We emphasize that Alg.\ \ref{alg_sparse_label_propagation_centralized} does not require knowledge of the 
partition $\partition$ underlying signal model \eqref{equ_def_clustered_signal_model}. It also does not involve any tuning parameters. 


\begin{algorithm}[htbp]
\caption{Primal-Dual Method for TV Minimization}{}
\begin{algorithmic}[1]
\renewcommand{\algorithmicrequire}{\textbf{Input:}}
\renewcommand{\algorithmicensure}{\textbf{Output:}}
\Require  empirical graph $\graph$ with incidence matrix $\mD\!\in\! \mathbb{R}^{\nredges \times \nrnodes}$ 
(see \eqref{equ_def_incidence_mtx}), training set $\trainingset$ with labels $\{ x_{i} \}_{i \in \trainingset}$. 
\Statex\hspace{-6mm}{\bf Initialize:} $k\!\defeq\!0$, $\bar{\vx} = \hat{\vx}^{(-1)}=\hat{\vx}^{(0)}=\hat{\vy}^{(0)}\!\defeq\! \mathbf{0}$, 
$\gamma_{i}\!\defeq\!1/d_{i}$, $\lambda_{\{i,j\}}\!=\!1/(2W_{i,j})$. 
\Repeat
\vspace*{2mm}
\State  $\tilde{\vx}  \defeq 2 \hat{\vx}^{(k)} - \hat{\vx}^{(k\!-\!1)}$
\vspace*{2mm}
\State $\hat{\vy}^{(k\!+\!1)}  \defeq \hat{\vy}^{(k)} + {\bf \Lambda}  \mD  \tilde{\vx}$ with ${\bf \Lambda}={\rm diag} \{ \lambda_{\{i,j\}} \}_{\{i,j\} \in \edges}$
\vspace*{2mm}
\State $\hat{y}_{e}^{(k\!+\!1)}\!\defeq\!\hat{y}_{e}^{(k\!+\!1)}\!/\!\max\{1, |\hat{y}_{e}^{(k\!+\!1)}| \}$  for every edge  $e\!\in\!\edges$
\vspace*{2mm}
\State $\hat{\vx}^{(k\!+\!1)}  \defeq \hat{\vx}^{(k)} - {\bm \Gamma} \mD^{T} \hat{\vy}^{(k\!+\!1)}$ with ${\bm \Gamma}={\rm diag} \{ \gamma_{i} \}_{i \in \nodes}$
\vspace*{2mm}
\State $\hat{x}_{i}^{(k\!+\!1)} \defeq x_{i}$  for every labeled node $i\!\in\!\trainingset$
\vspace*{2mm}
\State $k \defeq k\!+\!1$ 
\vspace*{2mm}
\State $\bar{\vx}^{(k)} \defeq (1\!-\!1/k) \bar{\vx}^{(k\!-\!1)}\!+\!(1/k) \hat{\vx}^{(k)} $
\vspace*{2mm}
\Until{stopping criterion is satisfied}
\vspace*{2mm}
\Ensure labels $\hat{x}_{i} \defeq \bar{x}_{i}^{(k)}$ \mbox{ for all nodes }$i \in \nodes$
\end{algorithmic}
\label{alg_sparse_label_propagation_centralized}
\end{algorithm}

There are various possible stopping criteria in Alg.\ \ref{alg_sparse_label_propagation_centralized}, including 
using a fixed number of iterations or testing for sufficient decrease of the objective function (see \cite{becker2011nesta} 
and Section \ref{experimental_results}). For testing if the objective function is decreased sufficiently, 
we can use the duality bound \eqref{equ_upper_bound_subopt} on the sub-optimality of the current 
objective function value $\| \bar{\vx}^{(k)} \|_{\rm TV}$. When using a fixed number of iterations, the 
following characterization of the convergence rate of Alg.\ \ref{alg_sparse_label_propagation_centralized} is 
helpful.  
\begin{proposition}[\hspace*{-1mm}\cite{ComplexitySLP2018}]
\label{thm_conv_result}
Consider the sequences $\hat{\vx}^{(k)}$ and $\hat{\vy}^{(k)}$ obtained from the update rule \eqref{equ_fixed_point_iterations} 
and starting from some initalizations $\hat{\vx}^{(0)}$ and $\hat{\vy}^{(0)}$. The averages 
\begin{equation} 
\label{equ_def_averages}
\bar{\vx}^{(\maxiter)}= (1/\maxiter) \sum_{k=1}^{\maxiter} \hat{\vx}^{(k)} \mbox{, and } \bar{\vy}^{(\maxiter)}= (1/\maxiter)\sum_{k=1}^{\maxiter} \hat{\vy}^{(k)}
\end{equation}
obtained after $\maxiter$ iterations 
of \eqref{equ_fixed_point_iterations}, satisfy 
\begin{align} 
\label{equ_bound_convergence_SLP}
\| \bar{\vx}^{(\maxiter)} \|_{\rm TV}\!-\! \| \primslp \|_{\rm TV} & \!\leq\!  \nonumber \\[2mm] 
&\hspace*{-20mm} (1/(2\maxiter)) \big( \| \hat{\vx}^{(0)}\!-\!\primslp \|^{2}_{{\bm \Gamma}^{-1}} + \| \hat{\vy}^{(0)}\!-\!\tilde{\vy}^{(\maxiter)} \|^{2}_{{\bm \Lambda}^{-1}}  \big)  
\end{align}
with $\tilde{\vy}^{(\maxiter)} = {\rm sign } \{\mD \bar{\vx}^{(\maxiter)}  \}$. 
Moreover, the sequence $\| \hat{\vy}^{(0)} - \tilde{\vy}^{(\maxiter)} \|_{{\bm \Lambda}^{-1}}$, for $\maxiter=1,2,\ldots$ is bounded. 
\end{proposition}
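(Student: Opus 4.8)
The plan is to recognize the iteration \eqref{equ_fixed_point_iterations} as the preconditioned primal--dual algorithm of \cite{PrecPockChambolle2011} applied to the saddle-point reformulation of \eqref{equ_min_constr_unconstr},
\[\min_{\tilde{\vx} \in \graphsigs}\, \max_{\vy \in \edgesigs}\; \vy^{T}\mD\tilde{\vx} + h(\tilde{\vx}) - g^{*}(\vy),\]
with the step-size matrices \eqref{equ_def_scaling_matrices}. The choice $\theta\!=\!1$ produces exactly the extrapolation $2\hat{\vx}^{(k)}\!-\!\hat{\vx}^{(k-1)}$ appearing in \eqref{equ_fixed_point_iterations}, and the calibration \eqref{equ_def_scaling_matrices} guarantees $\|{\bm \Gamma}^{1/2}\mD^{T}{\bm \Lambda}^{1/2}\|_{2}\!<\!1$ via \cite[Lemma 2]{PrecPockChambolle2011}. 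Under this condition, \cite[Thm.\ 1]{PrecPockChambolle2011} supplies, for the ergodic averages \eqref{equ_def_averages} and for \emph{every} test pair $(\vx,\vy)\!\in\!\graphsigs\!\times\!\edgesigs$, the partial primal--dual gap bound
\begin{align*}
&\big[\vy^{T}\mD\bar{\vx}^{(\maxiter)}\!-\!g^{*}(\vy)\!+\!h(\bar{\vx}^{(\maxiter)})\big]\!-\!\big[(\bar{\vy}^{(\maxiter)})^{T}\mD\vx\!-\!g^{*}(\bar{\vy}^{(\maxiter)})\!+\!h(\vx)\big] \\
&\quad \leq \tfrac{1}{2\maxiter}\big(\|\vx\!-\!\hat{\vx}^{(0)}\|^{2}_{{\bm \Gamma}^{-1}}\!+\!\|\vy\!-\!\hat{\vy}^{(0)}\|^{2}_{{\bm \Lambda}^{-1}}\big).
\end{align*}

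The core of the argument, following \cite{ComplexitySLP2018}, is to specialize this gap to the right test pair. I would choose $\vx\!=\!\primslp$, an optimal solution of \eqref{equ_min_constr}; since $\primslp$ is feasible, $h(\primslp)\!=\!0$. For the dual test vector I would pick $\vy\!=\!\tilde{\vy}^{(\maxiter)}\!=\!{\rm sign}\{\mD\bar{\vx}^{(\maxiter)}\}$, which is precisely the maximizer realizing $\|\bar{\vx}^{(\maxiter)}\|_{\rm TV}\!=\!\|\mD\bar{\vx}^{(\maxiter)}\|_{1}\!=\!\max_{\|\vy\|_{\infty}\le 1}\vy^{T}\mD\bar{\vx}^{(\maxiter)}$. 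The next step is to verify that every indicator term collapses: the primal resolvent in \eqref{equ_close_form_prox} enforces $\hat{\vx}^{(k)}\!\in\!\mathcal{Q}$ for all $k\!\ge\!1$, so the average $\bar{\vx}^{(\maxiter)}$ is feasible and $h(\bar{\vx}^{(\maxiter)})\!=\!0$; the dual resolvent clips entries to $[-1,1]$, so $\|\bar{\vy}^{(\maxiter)}\|_{\infty}\!\le\!1$ and $\|\tilde{\vy}^{(\maxiter)}\|_{\infty}\!\le\!1$, whence $g^{*}(\bar{\vy}^{(\maxiter)})\!=\!g^{*}(\tilde{\vy}^{(\maxiter)})\!=\!0$ by \eqref{equ_conv_conj_g}.

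With these reductions the left-hand side of the gap becomes exactly $\|\bar{\vx}^{(\maxiter)}\|_{\rm TV}\!-\!(\bar{\vy}^{(\maxiter)})^{T}\mD\primslp$, and a single application of H\"older's inequality together with $\|\bar{\vy}^{(\maxiter)}\|_{\infty}\!\le\!1$ gives $(\bar{\vy}^{(\maxiter)})^{T}\mD\primslp\!\le\!\|\mD\primslp\|_{1}\!=\!\|\primslp\|_{\rm TV}$. Substituting $\vx\!=\!\primslp$ and $\vy\!=\!\tilde{\vy}^{(\maxiter)}$ into the right-hand side then yields \eqref{equ_bound_convergence_SLP}. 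The final claim, boundedness of $\|\hat{\vy}^{(0)}\!-\!\tilde{\vy}^{(\maxiter)}\|_{{\bm \Lambda}^{-1}}$, is then immediate: the vectors $\tilde{\vy}^{(\maxiter)}$ all lie in the compact $\ell_{\infty}$-unit ball (each entry is a signum), so the fixed-weight norm of $\hat{\vy}^{(0)}\!-\!\tilde{\vy}^{(\maxiter)}$ is uniformly bounded over $\maxiter$ by the triangle inequality. The main obstacle is conceptual rather than computational: correctly identifying $\tilde{\vy}^{(\maxiter)}\!=\!{\rm sign}\{\mD\bar{\vx}^{(\maxiter)}\}$ as the dual test point that converts the abstract Chambolle--Pock gap into the concrete TV-suboptimality $\|\bar{\vx}^{(\maxiter)}\|_{\rm TV}\!-\!\|\primslp\|_{\rm TV}$, and checking that all four indicator contributions vanish so that no spurious $+\infty$ terms enter the bound.
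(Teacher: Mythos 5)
Your proposal is correct and takes essentially the approach of the paper, which itself offers no proof of Proposition \ref{thm_conv_result} but defers it by citation to \cite{ComplexitySLP2018}: there, as in your argument, the result follows by specializing the ergodic primal--dual gap bound of the preconditioned Chambolle--Pock method \cite{PrecPockChambolle2011} (with $\theta=1$ and the step sizes \eqref{equ_def_scaling_matrices}) to the test pair $\big(\primslp,\, {\rm sign}\{\mD\bar{\vx}^{(\maxiter)}\}\big)$, checking that all indicator terms vanish, and using H\"older's inequality to replace $(\bar{\vy}^{(\maxiter)})^{T}\mD\primslp$ by $\|\primslp\|_{\rm TV}$. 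Your observation that the $K$-dependent dual test point is legitimate (the gap bound holds pointwise for every test pair) and that boundedness of $\|\hat{\vy}^{(0)}-\tilde{\vy}^{(\maxiter)}\|_{{\bm \Lambda}^{-1}}$ follows from $\tilde{\vy}^{(\maxiter)}$ being a sign vector are exactly the points the cited proof relies on.
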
 
According to \eqref{equ_bound_convergence_SLP}, the sub-optimality of Alg.\ \ref{alg_sparse_label_propagation_centralized} 
after $K$ iterations is bounded as 
\begin{equation}
\label{equ_upper_bound_constant} 
\| \bar{\vx}^{(\maxiter)} \|_{\rm TV} - \| \primslp \|_{\rm TV}  \leq c/K,
\end{equation} 
where the constant $c$ does not depend on $K$ but might depend on the empirical graph $\graph$, via its weighted incidence matrix 
$\mD$ \eqref{equ_def_incidence_mtx}, as well as on the initial labels $\{ x_{i} \}_{i \in \trainingset}$. 
The bound \eqref{equ_upper_bound_constant} suggests that in order to ensure reducing the sub-optimality by a factor of two, we need 
to run Alg.\ \ref{alg_sparse_label_propagation_centralized} for twice as many iterations. The upper bound \eqref{equ_upper_bound_constant} 
is tight among all message passing (local) methods for solving \eqref{equ_min_constr}. 
In particular, the rate $1/K$ cannot be improved for a chain-structured empirical graph (see \cite{ComplexitySLP2018}). 

As indicated by \cite[Thm. 3.2]{Condat2013},  Alg.\ \ref{alg_sparse_label_propagation_centralized} is robust to numerical 
errors arising during the updates, which can be a crucial property for high-dimensional problems. 


The computational cost of one iteration in Alg.\ \ref{alg_sparse_label_propagation_centralized} is proportional to the 
number of edges in the empirical graph $\graph$. This can be verified by noting that Alg.\ \ref{alg_sparse_label_propagation_centralized} 
can be implemented as message passing on the empirical graph (see Alg.\ \ref{sparse_label_propagation_mp}). 
Thus, for a fixed number $K$ of iterations, the computational cost of Alg.\ \ref{alg_sparse_label_propagation_centralized} 
is proportional to the number of edges in the empirical graph. In contrast, the computational cost of 
state-of-the art maximum flow algorithms can be considerably higher \cite{GoldbergTarjan2014,Orlin2013}. 
Moreover, while Alg.\ \ref{alg_sparse_label_propagation_centralized} allows for a rather straightforward implementation 
on modern big data computing frameworks (see Section \ref{sec_LFR_graph}), this is typically more challenging for maximum flow 
methods which are (partially) based on combinatorial search (see \cite[Sec. 3.3.]{PrecPockChambolle2011}). 

We now show how to obtain a scalable implementation of Alg.\ \ref{alg_sparse_label_propagation_centralized} 
using message passing over the underlying empirical graph $\graph$. This message passing formulation, 
summarized in Alg.\ \ref{sparse_label_propagation_mp} (being a slight reformulation of \cite[Alg.\ 2]{ComplexitySLP2018}), 
is obtained by implementing the application of the graph 
incidence matrix $\mD$ and its transpose $\mD^{T}$ (cf. steps $2$ and $5$ of Alg.\ \ref{alg_sparse_label_propagation_centralized}) by local updates of 
the labels $\hat{x}_{i}$, i.e., updates which involve only the neighbourhoods $\mathcal{N}(i)$, $\mathcal{N}(j)$ of all edges $\{i,j\} \in \edges$ in the empirical graph $\graph$.  

Note that executing Alg.\ \ref{sparse_label_propagation_mp} does not require global knowledge (such as the maximum node degree $d_{\rm max}$ \eqref{equ_def_max_node_degree}) 
about the entire empirical graph. Indeed, if we associate each node in the data graph with a computational unit, execution of Alg.\ \ref{sparse_label_propagation_mp} requires each node 
$i \in \nodes$ only to store the neighboring values $\{ \hat{y}_{\{i,j\}}, W_{i,j} \}_{j \in \mathcal{N}(i)}$ and $\hat{x}_{i}^{(k)}$. Moreover, the number of arithmetic operations required at each node $i \in \nodes$ 
during each time step is proportional to the number $|\mathcal{N}(i)|$ of its neighbours $\mathcal{N}(i)$. 
Thus, Alg.\ \ref{sparse_label_propagation_mp} can be scaled to large datasets which can be represented as 
sparse networks having small maximum degree $d_{\rm max}$ \eqref{equ_def_max_node_degree}. The datasets generated 
in many important applications are accurately represented by such sparse networks \cite{barabasi2016network}. 

\begin{algorithm}[h]
\caption{Distributed Implementation of Alg.\ \ref{alg_sparse_label_propagation_centralized}}{}
\begin{algorithmic}[1]
\renewcommand{\algorithmicrequire}{\textbf{Input:}}
\renewcommand{\algorithmicensure}{\textbf{Output:}}
\Require empirical graph $\graph=(\nodes,\edges,\mW)$, training set $\trainingset$ with labels $\{ x_{i} \}_{i \in \trainingset}$. 
\vspace*{3mm}
\Statex\hspace{-6mm}{\bf Initialize:} $k\!\defeq\!0$, $\bar{\vx}=\hat{\vy}^{(0)}=\hat{\vx}^{(-1)}=\hat{\vx}^{(0)}\!\defeq\!\mathbf{0}$, 
$\gamma_{i}\!\defeq\!1/d_{i}$. 
\vspace*{1mm}
\Repeat
\vspace*{2mm}
\State for all nodes $i \in \nodes$: $\tilde{x}_{i}  \defeq 2 \hat{x}^{(k)}_{i}  - \hat{x}^{(k-1)}_{i}$    
\vspace*{2mm}
\State for all edges $e=(i,j)\!\in\!\edges$: 
\begin{equation}
\nonumber 
\hat{y}_{e}^{(k+1)}  \defeq \hat{y}^{(k)}_{e} +    (1/2)  (\tilde{x}_{e^{+}} - \tilde{x}_{e^{-}})
\end{equation}
\State for all edges $e \in \edges$: 
\begin{equation} 
\nonumber
\hat{y}_{e}^{(k+1)}  \defeq \hat{y}_{e}^{(k+1)} / \max\{1, |\hat{y}_{e}^{(k+1)}| \}
\end{equation} 
\State for all nodes $i\!\in\!\nodes$: 
\begin{equation}  
\nonumber
\hspace*{-3mm}\hat{x}^{(k+1)}_{i}\!\defeq\!\hat{x}^{(k)}_{i}\!-\!\gamma_{i} \bigg[ \hspace*{0mm}\sum\limits_{j \in \mathcal{N}^{+}(i)} \hspace*{-1mm}W_{i,j} \hat{y}^{(k+1)}_{(i,j)} \hspace*{-1mm}- \hspace*{-1mm}\sum\limits_{j \in \mathcal{N}^{-}(i)} \hspace*{-1mm}W_{i,j} \hat{y}^{(k+1)}_{(j,i)} \hspace*{0mm}\bigg]   
\end{equation}
\State for all labeled nodes $i\!\in\!\trainingset$:  $\hat{x}^{(k+1)}_{i} \defeq x_{i}$
\vspace*{2mm}
\State $k \defeq k+1$    
\vspace*{2mm}
\State for all nodes $i\!\in\!\nodes$: $\bar{x}_{i} \defeq (1-1/k)\bar{x}_{i} + (1/k) \hat{x}^{(k)}_{i}$
\vspace*{2mm}
\Until{stopping criterion is satisfied}
\vspace*{1mm}
\Ensure labels $\hat{x}_{i} \defeq \hat{x}^{(k)}_{i}$ \mbox{ for all }$i \in \nodes$
\end{algorithmic}
\label{sparse_label_propagation_mp}
\vspace*{-1mm}
\end{algorithm}
Alg.\ \ref{alg_sparse_label_propagation_centralized} implicitly also solves the dual problem \eqref{equ_dual_SLP} 
of TV minimization \eqref{equ_min_constr}. We might therefore interpret Alg.\ \ref{sparse_label_propagation_mp} 
as a message passing method for network optimization. In particular, associate the current approximation $\hat{\vy}^{(k)}$ 
for the optimal dual vector $\hat{\vy}$ (see \eqref{equ_dual_SLP}) with the flow $f^{(k)}: \edges\!\rightarrow\!\mathbb{R}$ 
having values $f^{(k)}_{e}\!\defeq\!W_{e} y^{(k)}_{e}$. Then, step $4$ of Alg.\ \ref{sparse_label_propagation_mp} aims at 
enforcing the capacity constraint \eqref{equ_def_cap_constraints} for the flow $f^{(k)}$. 
Moreover, step $5$ amounts to updating the current signal estimate $\hat{x}_{i}^{(k)}$, for each unlabeled node $i \in\!\nodes\!\setminus \trainingset$,  
by the (scaled) demand induced by the current flow $f^{(k)}$ \eqref{equ_conservation_law}. 
Thus, for each unlabeled node $i\!\in\!\nodes \setminus \trainingset$, we might interpret the signal estimates $\hat{x}_{i}^{(k)}$ 
as the (scaled) cumulative demand induced by the flows $f^{(k')}$ for $k'=1,\ldots,k$. The labeled nodes $i \in \trainingset$ have a constant supply 
$\hat{x}^{(k)}_{i} = x_{i}$ whose amount is the label $x_{i}$. Step $3$ of Alg.\ \ref{sparse_label_propagation_mp} balances 
discrepancies between accumulated demands $\hat{x}^{(k)}_{i}$ at the different nodes by adapting the flow 
$f^{(k)}_{(i,j)}$ through an edge $e=(i,j) \in \edges$ according to the difference $(\tilde{x}_{i} - \tilde{x}_{j})$.

\section{When is TV Minimization Accurate?} 
\label{sec_main_results} 

We now provide conditions which ensure that any solution $\hat{\vx}$ of TV minimization 
\eqref{equ_min_constr} is close to the true underlying graph signal $\vx\!=\!(x_{1},\ldots,x_{\signalsize})^{T}\!\in\!\mathbb{R}^{\signalsize}$ 
which can be well approximated by a piece-wise constant graph signal \eqref{equ_def_clustered_signal_model}. 

Since TV minimization \eqref{equ_min_constr} is a particular case of $\ell_{1}$ minimization \cite{KabRau2015Chap}, 
successful recovery is ensured by the stable analysis nullspace property (see \cite[Lemma 5]{NNSPFrontiers2018}). 

As we show in Proposition \ref{lem_flow_cond}, the stable analysis nullspace property is ensured if the nodes in the 
training set are sufficiently well connected to the cluster boundaries $\boundary$. To this end,  
we define the notion of resolving training sets. 
\begin{definition}
\label{def_sampling_set_resolves}
Consider a partition $\partition=  \{ \cluster_{1},\cluster_{2},\ldots,\cluster_{|\partition|} \}$ of the empirical graph $\graph=(\nodes,\edges,\mW)$ 
into disjoint subsets of nodes (clusters) $\cluster_{l} \subseteq \nodes$. 
A training set $\trainingset \subseteq \nodes$ resolves the partition $\partition$ if, for any collection of signs $\{b_{e} \in \{-1,1\}\}_{e \in \boundary}$, 
there exists a flow $f: \edges \rightarrow \mathbb{R}$ such that 
\begin{align} 
\label{equ_resolv_cluster}
& f_{(i,j)}  \!=\! b_{(i,j)} 2 W_{i,j} \mbox{ for each } (i,j)\!\in\!\boundary \nonumber \\[3mm]
& |f_{(i,j)}| \!\leq\!W_{i,j} \mbox{ for each } (i,j)\!\in\!\edges\!\setminus\!\boundary  \\[3mm]
&\hspace*{-1mm} \sum_{(i,j) \in \edges} \hspace*{-1mm} f_{(i,j)}\!-\hspace*{-2mm}\sum_{(j,i) \in \edges} \hspace*{-1mm}f_{(j,i)}\!=\!0 \mbox{ for each } i\!\in\!\nodes\!\setminus\!\trainingset.\nonumber
\end{align} 
\end{definition} 
We highlight that Definition \ref{def_sampling_set_resolves} is only required for the analysis of the solutions of 
TV minimization \eqref{equ_min_constr}. In order to use Alg.\ \ref{alg_sparse_label_propagation_centralized} 
for solving \eqref{equ_min_constr}, we do not need any to place any requirements on the training set $\trainingset$. 
We can perfectly use Alg.\ \ref{alg_sparse_label_propagation_centralized} also when the training set $\trainingset$ 
does not resolve the partition $\partition$ underlying the signal model \eqref{equ_def_clustered_signal_model}. 
However, in this case we cannot guarantee that the estimate delivered by Alg.\ \ref{alg_sparse_label_propagation_centralized} is 
close to the true underlying graph signal. 

It is important to note that Definition \ref{def_sampling_set_resolves} involves both the labeled 
training set $\trainingset$ and the partition $\partition$. For a given training set $\trainingset$, 
we can increase the chance of satisfying \eqref{equ_resolv_cluster} by optimizing the partition $\partition$ 
underlying \eqref{equ_def_clustered_signal_model}. 
Enlarging the training set $\trainingset$ (by acquiring more labels), will  
increase the chance of satisfying \eqref{equ_resolv_cluster} as there are fewer unlabeled 
nodes for which the last condition in \eqref{equ_resolv_cluster} has to be ensured. 

Definition \ref{def_sampling_set_resolves} requires a sufficiently large network flow (across cluster boundaries) 
between the labeled nodes $\trainingset$. These network flows have to be such that the boundary edges 
$e \in \partial \partition$ are flooded (or saturated) with an amount of flow at least $2 W_{e}$. The training set 
$\trainingset \subseteq \nodes$ depicted in Fig.\ \ref{fig_clustered_graph_signal} resolves the partition $\partition=\{\cluster_{1},\cluster_{2}\}$.  

\begin{proposition}[Thm. 4 in \cite{NNSPFrontiers2018}] 
\label{main_thm_approx_sparse}
Consider data with empirical graph $\graph$ and true labels $x_{i}$ forming a graph signal $\vx \in \graphsigs$.  
We are provided with observed labels $x_{i}$ at nodes in the training set $\trainingset$. 
If $\trainingset$ resolves the partition $\partition=\{\cluster_{1},\ldots,\cluster_{|\partition|}\}$, 
any solution $\hat{\vx}$ of \eqref{equ_min_constr} satisfies
\begin{equation}
\label{equ_bound_TVmin_TV}
\| \hat{\vx}\!-\!\vx \|_{\rm TV} \leq 6  \min_{\{ a_{l} \}_{l=1}^{|\partition|}} \big\| \xsig- \sum_{l=1}^{|\partition|} a_{l} \mathcal{I}_{\cluster_{l}}[\cdot] \big\|_{\rm TV}, 
\end{equation} 
\end{proposition}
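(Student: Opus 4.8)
The plan is to split \eqref{equ_bound_TVmin_TV} into two largely independent implications and then chain them. Setting $S \defeq \boundary$, the first implication is that the resolving hypothesis of Definition \ref{def_sampling_set_resolves} forces an \emph{analysis nullspace property} for the incidence matrix $\mD$ relative to $S$, with constant $\rho = 1/2$: namely $\|(\mD \vh)_{\boundary}\|_{1} \leq \tfrac{1}{2}\|(\mD \vh)_{\boundary^{c}}\|_{1}$ for every $\vh$ vanishing on the training set. The second implication is the standard robustness argument for $\ell_{1}$-analysis minimization: such a nullspace property converts the optimality of $\hat{\vx}$ into the claimed error bound. Since the statement is exactly the combination of Proposition \ref{lem_flow_cond} (first implication) and the recovery lemma cited from \cite{NNSPFrontiers2018} (second implication), the work lies in reconstructing these two ingredients.

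For the nullspace property I would fix an arbitrary $\vh$ with $h_{i} = 0$ for all $i \in \trainingset$, abbreviate $\vu \defeq \mD \vh$, and choose boundary signs $b_{e} \defeq {\rm sign}(u_{e})$ for $e \in \boundary$. Definition \ref{def_sampling_set_resolves} then hands me a flow $f$ with $f_{e} = 2 b_{e} W_{e}$ on $\boundary$, $|f_{e}| \leq W_{e}$ off $\boundary$, and zero net flow at every unlabeled node. Passing to the dual vector $y_{e} \defeq f_{e}/W_{e}$ (as in Proposition \ref{prop_dual_TV_min_flow}) gives $\mD^{T}\vy = \vv$ with $v_{i} = 0$ for $i \notin \trainingset$. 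The decisive step is the orthogonality $\langle \vy, \vu\rangle = \langle \mD^{T}\vy, \vh\rangle = \langle \vv, \vh\rangle = 0$, which holds because every summand $v_{i} h_{i}$ vanishes — either $v_{i} = 0$ (unlabeled node) or $h_{i} = 0$ (labeled node). Splitting this identity across $\boundary$ and $\edges \setminus \boundary$, and using $y_{e} u_{e} = 2|u_{e}|$ on the boundary together with $|y_{e} u_{e}| \leq |u_{e}|$ off it, I obtain $2\|\vu_{\boundary}\|_{1} \leq \|\vu_{\boundary^{c}}\|_{1}$, which is precisely the desired nullspace property.

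For the second implication I would set $\vh \defeq \hat{\vx} - \vx$, note that feasibility of $\vx$ and optimality of $\hat{\vx}$ give $\|\mD \hat{\vx}\|_{1} \leq \|\mD \vx\|_{1}$, and that $\vh$ vanishes on $\trainingset$. Writing $\sigma \defeq \min_{\{a_{l}\}}\|\vx - \sum_{l} a_{l}\mathcal{I}_{\cluster_{l}}[\cdot]\|_{\rm TV}$ and letting $\vx^{\star}$ attain it, the piece-wise constant structure forces $(\mD \vx^{\star})_{\boundary^{c}} = 0$, hence $\|(\mD \vx)_{\boundary^{c}}\|_{1} \leq \sigma$. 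Substituting $\hat{\vx} = \vx + \vh$ into the optimality inequality and applying the reverse triangle inequality separately on $\boundary$ and its complement yields the cone bound $\|(\mD \vh)_{\boundary^{c}}\|_{1} \leq \|(\mD \vh)_{\boundary}\|_{1} + 2\sigma$. Feeding in the nullspace property $\|(\mD\vh)_{\boundary}\|_{1}\leq\tfrac12\|(\mD\vh)_{\boundary^{c}}\|_{1}$ gives $\|(\mD \vh)_{\boundary^{c}}\|_{1} \leq 4\sigma$ and $\|(\mD \vh)_{\boundary}\|_{1} \leq 2\sigma$, so that $\|\hat{\vx} - \vx\|_{\rm TV} = \|\mD \vh\|_{1} \leq 6\sigma$; the constant $6 = 2(1+\rho)/(1-\rho)$ at $\rho = 1/2$ is exactly why the factor $2$ appears in the boundary saturation of Definition \ref{def_sampling_set_resolves}.

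The main obstacle is the first implication: turning the combinatorial resolving condition into the analytic nullspace inequality. The subtlety is that the certifying flow must be chosen \emph{after} seeing the (a priori unknown) signs of $\mD \vh$ on the boundary, yet must remain conservative at every unlabeled node simultaneously; Definition \ref{def_sampling_set_resolves} is engineered so that such a flow exists for \emph{all} $2^{|\boundary|}$ sign patterns, which is precisely what lets the orthogonality argument close for an arbitrary error $\vh$. Everything downstream — the cone inequality and the bookkeeping leading to the constant $6$ — is routine once the support $S = \boundary$ and the approximation error $\sigma$ are correctly identified.
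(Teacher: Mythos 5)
Your proposal is correct and takes essentially the same route as the paper, which defers the proof to \cite[Thm.\ 4]{NNSPFrontiers2018} but describes exactly this mechanism in Section \ref{sec_main_results}: the resolving condition yields the stable analysis nullspace property over the boundary $\boundary$ (your first implication, via the flow certificate $y_{e} = f_{e}/W_{e}$ and the orthogonality $\langle \mD^{T}\vy,\vh\rangle = 0$), and the standard $\ell_{1}$-analysis cone argument converts this into the error bound (your second implication). Your bookkeeping, including the constant $6 = 2(1+\rho)/(1-\rho)$ at $\rho = 1/2$ arising from the factor $2$ saturation in Definition \ref{def_sampling_set_resolves}, matches the cited argument.
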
 
For convenience, we spell out a bound on the error $\hat{x}_{i}\!-\!x_{i}$ itself which is a direct consequence of \eqref{equ_bound_TVmin_TV}.
\begin{corollary}
Under the same assumptions as in Proposition \ref{main_thm_approx_sparse}, any solution of \eqref{equ_min_constr} satisfies
\begin{equation}
\label{equ_bound_TVmin_TV_ell1}
\max_{i \in \nodes} | \hat{x}_{i}\!-\!x_{i}| \leq 6 d_{\rm max}  \min_{\{ a_{l} \}_{l=1}^{|\partition|}} \big\| \xsig- \sum_{l=1}^{|\partition|} a_{l} \mathcal{I}_{\cluster_{l}}[\cdot] \big\|_{1}.
\end{equation} 
\end{corollary}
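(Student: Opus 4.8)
The plan is to obtain \eqref{equ_bound_TVmin_TV_ell1} directly from the TV error bound \eqref{equ_bound_TVmin_TV} of Proposition \ref{main_thm_approx_sparse} by two elementary norm comparisons, one applied to each side of \eqref{equ_bound_TVmin_TV}. Write $\vz \defeq \primslp - \vx$ for the recovery error and let $\vw \defeq \vx - \sum_{l=1}^{|\partition|} a_{l} \mathcal{I}_{\cluster_{l}}[\cdot]$ be a piece-wise constant signal attaining the minimum on the right-hand side of \eqref{equ_bound_TVmin_TV}. The target then splits into showing $\max_{i \in \nodes} |z_{i}| \le \| \vz \|_{\rm TV}$ (to upper-bound the left side of the desired inequality by the left side of \eqref{equ_bound_TVmin_TV}) together with $\| \vw \|_{\rm TV} \le d_{\rm max} \| \vw \|_{1}$ (to upper-bound the right side of \eqref{equ_bound_TVmin_TV}); chaining these two inequalities with \eqref{equ_bound_TVmin_TV} yields the claim.

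First I would dispatch the right-hand side, which is routine. Applying the triangle inequality edgewise and then regrouping the sum by nodes via the definition \eqref{equ_def_neighborhood} of the weighted degree gives
\begin{equation}
\nonumber
\| \vw \|_{\rm TV} = \sum_{\{i,j\}\in\edges} W_{i,j}\,|w_{i} - w_{j}| \le \sum_{\{i,j\}\in\edges} W_{i,j}\,\big( |w_{i}| + |w_{j}| \big) = \sum_{i\in\nodes} d_{i}\,|w_{i}|.
\end{equation}
Bounding $d_{i} \le d_{\rm max}$ then produces $\| \vw \|_{\rm TV} \le d_{\rm max} \| \vw \|_{1}$, and this is precisely where the factor $d_{\rm max}$ in \eqref{equ_bound_TVmin_TV_ell1} originates.

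The left-hand side is the delicate step and the one I expect to be the main obstacle. Since both $\primslp$ and $\vx$ agree with the observed labels on $\trainingset$, the error vanishes there, i.e. $z_{i} = 0$ for all $i \in \trainingset$. Fixing a node $i^{\star}$ attaining $\max_{i} |z_{i}|$ and a simple path $i^{\star} = v_{0}, v_{1}, \dots, v_{m} = i_{0}$ terminating at some labeled node $i_{0} \in \trainingset$, a telescoping argument gives
\begin{equation}
\nonumber
|z_{i^{\star}}| = \Big| \sum_{k=1}^{m} \big( z_{v_{k}} - z_{v_{k-1}} \big) \Big| \le \sum_{k=1}^{m} |z_{v_{k}} - z_{v_{k-1}}| \le \sum_{\{i,j\}\in\edges} W_{i,j}\,|z_{i} - z_{j}| = \| \vz \|_{\rm TV},
\end{equation}
where the middle inequality uses that the (distinct) path edges form a subset of $\edges$ and that each unweighted increment is dominated by its weighted counterpart. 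The subtlety is that this comparison $\| \vz \|_{\infty} \le \| \vz \|_{\rm TV}$ presupposes that every connected component of $\graph$ contains a labeled node, so the telescoping can terminate at a pinned value, and that the edge weights are normalized so that $W_{e} \ge 1$; otherwise a path through low-weight edges inflates $\| \vz \|_{\infty}$ relative to $\| \vz \|_{\rm TV}$ (more generally one picks up a factor $1/\min_{e} W_{e}$, which would have to be absorbed into the constant). Under the standing assumptions making Proposition \ref{main_thm_approx_sparse} meaningful these conditions hold, and combining the two displayed inequalities with \eqref{equ_bound_TVmin_TV} completes the argument.
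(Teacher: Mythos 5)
Your proposal follows the same basic route as the paper --- chain the TV bound \eqref{equ_bound_TVmin_TV} with elementary norm comparisons --- but it is considerably more careful than the paper's own proof. The paper's argument is a single line: apply $\| \vz \|_{\rm TV} \leq d_{\rm max} \| \vz \|_{1}$ to the right-hand side of \eqref{equ_bound_TVmin_TV}; this is exactly your degree-regrouping inequality for $\vw$. What the paper never addresses is the left-hand side, i.e., the passage from $\| \hat{\vx} - \vx \|_{\rm TV}$ down to $\max_{i \in \nodes} |\hat{x}_{i} - x_{i}|$; that step is used silently. Your telescoping-path argument supplies precisely this missing piece, and the two caveats you attach to it are not pedantry --- they are genuinely necessary. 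The comparison $\| \vz \|_{\infty} \leq \| \vz \|_{\rm TV}$ needs (i) a labeled node in the connected component of the maximizing node (otherwise the TV minimizer is determined only up to an additive constant on that component) and (ii) edge weights at least $1$ along the connecting path (otherwise one picks up a factor $1/\min_{e} W_{e}$). Indeed, without the weight normalization the corollary itself fails: take two nodes joined by a single edge of weight $W_{1,2} = \epsilon < 1/6$, training set $\trainingset = \{1\}$, the single-cluster partition $\partition = \{\{1,2\}\}$ (which $\trainingset$ resolves via the zero flow), and true signal $\vx = (0,\delta)$; then $\hat{\vx} = (0,0)$, the left side of \eqref{equ_bound_TVmin_TV_ell1} equals $\delta$, while the right side is $6 d_{\rm max} \delta = 6\epsilon\delta < \delta$. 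So your proof is correct exactly to the extent that the statement is, and its main value relative to the paper's proof is that it makes explicit an implicit normalization and connectivity assumption that the paper glosses over.
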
 
\begin{proof}
The bound \eqref{equ_bound_TVmin_TV_ell1} is obtained from \eqref{equ_bound_TVmin_TV} using 
the inequality $\| \vz \|_{\rm TV}\!\leq\!d_{\rm max} \| \vz \|_{1}$ (see \eqref{equ_def_TV}) with the maximum weighted degree $d_{\rm max}$ \eqref{equ_def_max_node_degree}. 
\end{proof} 
Thus, if the training set $\trainingset$ resolves the partition underlying \eqref{equ_def_clustered_signal_model}, 
any solution $\hat{\vx}$ to TV minimization \eqref{equ_min_constr} is close (in TV seminorm) to the true 
labels if they can be well approximated by a piece-wise constant graph signal \eqref{equ_def_clustered_signal_model}. 
For labels forming exactly a piece-wise constant signal, we can specialize Proposition \ref{main_thm_approx_sparse} as follows. 
\begin{corollary}[Thm. 3 in \cite{NNSPFrontiers2018}]
\label{corr_exact_recovery}
Consider data with empirical graph $\graph$ and true labels $x_{i}$ forming a piece-wise constant graph signal 
$\vx \in \graphsigs$ (see \eqref{equ_def_clustered_signal_model}) over the partition $\partition=\{\cluster_{1},\ldots,\cluster_{|\partition|}\}$. 
If the training set $\trainingset$ resolves $\partition$, the solution $\hat{\vx}$ of \eqref{equ_min_constr} is unique and coincides with $\vx$. 
\end{corollary}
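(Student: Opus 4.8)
The plan is to derive the corollary from Proposition \ref{main_thm_approx_sparse} by exploiting that a signal which is \emph{exactly} piece-wise constant incurs zero approximation error, and then to upgrade the resulting estimate in the TV seminorm to exact equality. Concretely, since $\vx$ has the form \eqref{equ_def_clustered_signal_model} over the partition $\partition$, the coefficients $a_{l}$ that generate $\vx$ attain the minimum on the right-hand side of \eqref{equ_bound_TVmin_TV} with value zero. Because $\trainingset$ resolves $\partition$, Proposition \ref{main_thm_approx_sparse} applies and forces $\| \hat{\vx} - \vx \|_{\rm TV} \leq 0$, hence $\| \hat{\vx} - \vx \|_{\rm TV} = 0$ for \emph{every} solution $\hat{\vx}$ of \eqref{equ_min_constr}.

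Next I would translate this seminorm identity into an algebraic one. By \eqref{equ_repr_ell_1_TV}, $\| \hat{\vx} - \vx \|_{\rm TV} = \| \mD (\hat{\vx} - \vx) \|_{1}$, so $\| \hat{\vx} - \vx \|_{\rm TV} = 0$ is equivalent to $\mD(\hat{\vx} - \vx) = \mathbf{0}$. By the structure of the incidence matrix \eqref{equ_def_incidence_mtx}, this means $(\hat{\vx} - \vx)_{i} = (\hat{\vx} - \vx)_{j}$ for every edge $\{i,j\} \in \edges$, i.e.\ $\hat{\vx} - \vx$ is constant along every edge and hence constant on each connected component of $\graph$. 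In passing this also shows $\| \hat{\vx} \|_{\rm TV} = \| \vx \|_{\rm TV}$, so that $\vx$---which is trivially feasible for \eqref{equ_min_constr}---is itself a minimizer; the content of the corollary is that it is the \emph{only} one.

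The remaining, and main, difficulty is to pass from ``$\hat{\vx} - \vx$ is constant on each connected component'' to ``$\hat{\vx} = \vx$'', i.e.\ to remove the ambiguity inherent in the TV \emph{seminorm}, whose nullspace contains all locally constant signals. Here I would invoke the feasibility constraint $\hat{x}_{i} = x_{i}$ for $i \in \trainingset$, which pins $\hat{\vx} - \vx$ to zero at every labeled node; the per-component constant then vanishes on any connected component that meets $\trainingset$. What must still be argued is that every relevant connected component contains a labeled node, and this is exactly where the resolving hypothesis of Definition \ref{def_sampling_set_resolves} enters: for any cluster $\cluster_{l}$ with nonempty boundary $\partial \cluster_{l}$, summing the last (conservation) constraint in \eqref{equ_resolv_cluster} over all $i \in \cluster_{l}$ cancels the internal-edge flows and equates the net flow across $\partial \cluster_{l}$ with the total net outflow at the labeled nodes of $\cluster_{l}$ (the unlabeled nodes contribute zero); choosing the signs $b_{e}$ so that this net boundary flow equals $2 \sum_{e \in \partial \cluster_{l}} W_{e} > 0$ forces $\cluster_{l} \cap \trainingset \neq \emptyset$. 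Combining this labeled-node guarantee with the component-wise constancy yields $\hat{\vx} = \vx$ for every solution, establishing exact recovery and uniqueness simultaneously. I expect the seminorm-to-norm upgrade, rather than the application of Proposition \ref{main_thm_approx_sparse}, to be the crux.
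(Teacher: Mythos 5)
Your proposal is correct in substance, and it supplies exactly what the paper omits: Corollary \ref{corr_exact_recovery} is stated by importing it from the companion work \cite{NNSPFrontiers2018} (as ``Thm.~3''), with only the remark that it ``specializes'' Proposition \ref{main_thm_approx_sparse}, so there is no internal proof to compare against. Your derivation follows that gestured route---plugging the exactly piece-wise constant $\vx$ into \eqref{equ_bound_TVmin_TV} to get $\| \hat{\vx} - \vx \|_{\rm TV} = 0$---and then does the real work the paper leaves unsaid: upgrading the vanishing of a \emph{seminorm} to exact equality. Both of your ingredients there are sound. First, $\| \mD(\hat{\vx}-\vx)\|_{1}=0$ forces $\hat{\vx}-\vx$ to be constant on each connected component. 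Second, your flow argument is a legitimate use of Definition \ref{def_sampling_set_resolves}: summing the conservation constraints over a cluster $\cluster_{l}$ cancels the intra-cluster flows, and since the resolving property must hold for \emph{every} sign pattern, you may choose all $b_{e}$, $e \in \partial\cluster_{l}$, to point outward, making the net boundary flow $2\sum_{e \in \partial \cluster_{l}} W_{e} > 0$; this is impossible unless some node of $\cluster_{l}$ is exempt from conservation, i.e.\ $\cluster_{l} \cap \trainingset \neq \emptyset$. You are also right that this step, not the invocation of Proposition \ref{main_thm_approx_sparse}, is the crux.

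One caveat on your final gluing step. What you actually need is that every \emph{connected component} of $\graph$ meets $\trainingset$, whereas your flow argument delivers a labeled node in every \emph{cluster with nonempty boundary}; these do not coincide when clusters straddle several components, or when a component lies wholly inside one cluster (such a component contributes no boundary edges, so the resolving condition says nothing about it). If $\graph$ is connected---the setting the paper implicitly assumes throughout---the gap closes at once: for $|\partition|\ge 2$ every nonempty cluster has a nonempty boundary, hence $\trainingset \neq \emptyset$, and the single global constant $\hat{\vx}-\vx$ vanishes at any labeled node. For disconnected $\graph$ the statement itself can fail: take two disjoint triangles as the two clusters and label one node of the first; then $\boundary=\emptyset$, so any training set resolves $\partition$ vacuously, yet the unlabeled triangle's value is unrecoverable. (Similarly, the degenerate case $|\partition|=1$ needs $\trainingset\neq\emptyset$ as an extra hypothesis, consistent with the paper's own footnote on constant signals.) So this is an imprecision in the corollary's statement rather than a defect you could repair; you should simply state the connectivity hypothesis explicitly before your last step.
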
 
 
We emphasize that Alg.\ \ref{alg_sparse_label_propagation_centralized} does not require knowledge of the 
partition $\partition=\{\mathcal{C}_{1},\ldots,\mathcal{C}_{|\partition|}\}$. Indeed, we could use Alg.\ 
\ref{alg_sparse_label_propagation_centralized} to determine the clusters $\cluster_{l}$ if the underlying 
labels $x_{i}$ form a piece-wise constant signal $x_{i} = \sum_{l=1}^{|\partition|} a_{l} \mathcal{I}_{\cluster_{l}}[i]$ 
with $a_{l} \neq a_{l'}$ for different clusters $l \neq l'$.

Proposition \ref{main_thm_approx_sparse} and Corollary \ref{corr_exact_recovery} require the  
partition $\partition$ in \eqref{equ_def_clustered_signal_model} to be resolved by the training set $\trainingset$. 
The direct verification if a given partition is resolved by a particular training set is computationally challenging 
as it involves an exponential number of constraints \eqref{equ_resolv_cluster} to be evaluated. However, 
if the empirical graph is modeled using a probabilistic model, such as the stochastic block model (SBM) \cite{AbbeSBM2018}, 
we can make use of large deviation results to determine network parameter regimes such that 
\eqref{equ_resolv_cluster} is satisfied with high probability \cite{Karger1999}. 

We now show how to verify the validity of \eqref{equ_resolv_cluster} 
using maximum flow algorithms \cite{BertsekasNetworkOpt,KleinbergTardos2006}. To this end, 
we define a particular subgraph $\graph_{l}$ associated with the clusters $\cluster_{l}$ of a 
partition $\partition=\{\cluster_{1},\ldots,\cluster_{|\partition|}\}$ which is resolved by $\trainingset$. 
\begin{definition} 
\label{equ_def_augmented_cluster_graph}
For a given cluster $\cluster_{l} \subseteq \nodes$ within the empirical graph $\graph=(\nodes,\edges,\mathbf{W})$, we define the augmented cluster subgraph 
$\graph_{l}\!=\!(\cluster_{l}\!\cup\!\{0\},\edges_{l},\mathbf{C}^{(l)})$ whose nodes are constituted by the cluster $\cluster_{l}$ and the additional node $0$. 
The edge set $\edges_{l}$ of $\graph_{l}$ is defined as  
\begin{equation}
\edges_{l} =  \{ \{i,j\}\!\in\!\edges: i,j\!\in\!\cluster_{l} \}\!\cup\!\{  \{0,i\}: i \in \partial \cluster_{l}\!\cap\!\cluster_{l} \}.   
\end{equation}  
Thus, the edges $\edges_{l}$ of the augmented cluster subgraph $\graph_{l}$ are constituted by (i) the intra-cluster edges $\{ \{i,j\}\!\in\!\edges: i,j\!\in\!\cluster_{l} \}$ connecting 
nodes within cluster $\cluster_{l}$ of the empirical graph $\graph$ and (ii) one additional edge $\{0,i\}$ for each node $i\!\in\!\partial \cluster_{l}\!\cap\!\cluster_{l}$ on the boundary 
of cluster $\cluster_{l}$. 
The weights $C^{(l)}_{e}$ of the edges $e \in \edges_{l}$ in the graph $\graph_{l}$ are defined as 
\begin{equation}
\label{equ_capacity_intra_cluster}
\hspace*{-1mm}C^{(l)}_{e}\!\defeq\! W_{i,j}  \mbox{ for every edge } e\!=\!\{i,j\}\!\in\!\edges \mbox{ with } i,j\!\in\!\cluster_{l} 
\end{equation} 
and 
\begin{equation}
\label{equ_boundary_capacity}
C^{(l)}_{\{0,i\}}\!\defeq\!2 \sum_{j \in \mathcal{N}(i)\!\setminus\!\cluster_{l}}  W_{i,j}  \mbox{ for each node } i\!\in\!\partial \cluster_{l}\!\cap\!\cluster_{l}.
\end{equation} 
\end{definition} 
To illustrate Definition \ref{equ_def_augmented_cluster_graph}, Fig.\ \ref{fig_augmented_subgraphs} depicts 
the augmented subgraphs of the clusters in the empirical graph in Fig.\ \ref{fig_clustered_graph_signal}. 

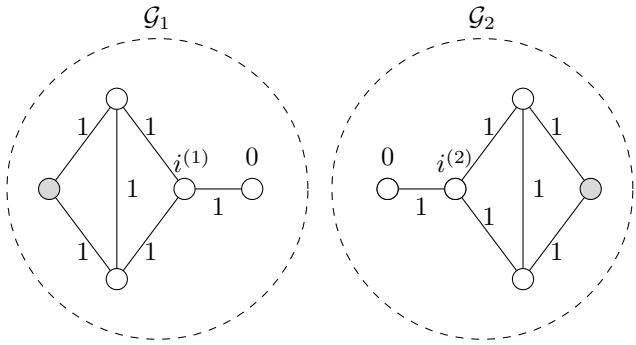
\begin{figure}[htbp]
\begin{center}
\hspace*{-5mm}
\begin{tikzpicture}
    \tikzset{x=0.9cm,y=1.2cm,every path/.style={>=latex},node style/.style={circle,draw}} 
    \coordinate[] (x2) at (0,0); 
    \coordinate[] (x10) at (-1,1);    
    \coordinate[] (x3) at (0,2);
    \coordinate[] (x4) at (1,1);  
    \draw[line width=0.4,-] (x2) edge  node [right] {$1$} (x3);
    \draw[line width=0.4,-] (x2) edge  node [below] {$1$} (x10);
    \draw[line width=0.4,-] (x3) edge   node [above] {$1$} (x10);
    \draw[line width=0.4,-] (x2) edge   node [below] {$1$} (x4);
    \draw[line width=0.4,-] (x3) edge   node [above] {$1$}(x4);
    \coordinate[] (x7) at (2,1);
    \draw[line width=0.4,-] (x4) edge node [below] {$1$}    (x7);
    \coordinate[] (x13) at (4,1) ;    
    \coordinate[] (x12) at (5,1) ;    
    \coordinate[] (x5) at (6,0); 
    \coordinate[] (x6) at (6,2);    
    \coordinate[] (x11) at (7,1);              
        \draw[line width=0.4,-] (x12) edge node [below] {$1$}    (x13);
   \draw[line width=0.4,-] (x5) edge node[above]{$1$} (x12);
      \draw[line width=0.4,-] (x6) edge node[above]{$1$} (x12); 
   \draw[line width=0.4,-] (x5) edge node[right]{$1$} (x6);
   \draw[line width=0.4,-] (x11) edge node[above]{$1$}(x6);
            \draw[line width=0.4,-] (x5) edge  node [below] {$1$} (x11);
    \draw [fill=white]   (x2)  circle (4pt);
    \draw [fill=white]   (x3)  circle (4pt);
    \draw [fill=gray!30]   (x10)  circle (4pt);
    \draw [fill=white]   (x4)  circle (4pt) node [above=1mm] {\hspace*{2mm}$i^{(1)}$} ;
   \draw [fill=white]   (x5)  circle (4pt);
    \draw [fill=white]   (x6)  circle (4pt);
       \draw [fill=white]   (x12)  circle (4pt);
              \draw [fill=white]   (x13)  circle (4pt);
    \draw [fill=gray!30]   (x11)  circle (4pt);
    \draw [fill=white]   (x13)  circle (4pt) node [above=2mm] {$0$} ;
        \draw [fill=white]   (x12)  circle (4pt) node [above=1mm] {$i^{(2)}$} ;
    \draw [fill=white]   (x7)  circle (4pt) node [above=2mm] {$0$} ;
    \node[draw,circle,dashed,minimum size=4cm,inner sep=0pt,label={$\mathcal{G}_1$}] at (0.6,1) {};
        \node[draw,circle,dashed,minimum size=4cm,inner sep=0pt,label={$\mathcal{G}_2$}] at (5.4,1) {};
\end{tikzpicture}
\end{center}
\caption{Augmented subgraphs $\graph_{1}, \graph_{2}$ obtained from the partitioned empirical graph in Fig.\ \ref{fig_clustered_graph_signal}. 
Each subgraph is obtained from a cluster $\cluster_{l}$ by adding edges from each boundary node $i^{(l)}\!\in\!\partial\cluster_{l}$ to the augmented node $0$. 
The numbers indicate the capacity constraints \eqref{equ_def_cap_constraints} along the edges.}
\label{fig_augmented_subgraphs} 
\end{figure} 

\begin{proposition}
\label{lem_flow_cond}
Consider an empirical graph $\graph = (\nodes,\edges,\mathbf{W})$ which is partitioned into the 
clusters $\partition = \{\cluster_{1},\ldots,\cluster_{|\partition|} \}$. Assume that each cluster $\cluster_{l}$ 
contains at least one labeled node $i^{(l)} \in \cluster_{l} \cap \trainingset$ from the training set $\trainingset \subseteq \nodes$. 
If, for each cluster $\cluster_{l}$, the corresponding subgraph $\graph_{l}$ (see Definition \ref{equ_def_augmented_cluster_graph}) 
supports a network flow (using the capacities \eqref{equ_capacity_intra_cluster} and \eqref{equ_boundary_capacity} 
for the capacity constraints \eqref{equ_def_cap_constraints}) of value $2 \sum_{e \in \partial \cluster_{l}} W_{e}$ between 
the source node $i^{(l)}$ and the sink node $0$, then the training set $\trainingset$ resolves the partition $\partition$. 
\end{proposition}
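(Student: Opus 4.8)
The plan is to reduce the (global) resolving condition of Definition~\ref{def_sampling_set_resolves} to a per-cluster flow-routing problem, and then to solve each per-cluster problem using the hypothesized maximum flow in the augmented subgraph $\graph_{l}$. Fix an arbitrary sign pattern $\{b_{e} \in \{-1,1\}\}_{e \in \boundary}$. On every boundary edge $e = \{i,j\} \in \boundary$ I would simply \emph{declare} the value $f_{e} \defeq b_{e} 2 W_{e}$, so that the first requirement in \eqref{equ_resolv_cluster} holds by fiat. Because every boundary edge joins two distinct clusters, the two remaining requirements in \eqref{equ_resolv_cluster} --- the capacity bound on the intra-cluster edges and conservation at the unlabeled nodes --- decouple across the clusters: inside each $\cluster_{l}$ it remains to produce an intra-cluster flow that, at every boundary node $i \in \partial\cluster_{l}\cap\cluster_{l}$, exactly supplies (or absorbs) the declared boundary flow, that is conserved at every other node of $\cluster_{l}$, and that may violate conservation only at the labeled node $i^{(l)}$.

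First I would extract the geometry forced by the hypothesis. The augmented edges $\{\{0,i\}: i\in\partial\cluster_{l}\cap\cluster_{l}\}$ are the only edges of $\graph_{l}$ incident to the sink $0$, and by \eqref{equ_boundary_capacity} their total capacity is $\sum_{i} 2\sum_{j\in\mathcal{N}(i)\setminus\cluster_{l}}W_{i,j} = 2\sum_{e\in\partial\cluster_{l}}W_{e}$. Since the hypothesized flow attains exactly this value and all of it must reach $0$ through these edges, each augmented edge is \emph{saturated}: the edge $\{0,i\}$ carries precisely $C^{(l)}_{\{0,i\}} = 2\sum_{j\in\mathcal{N}(i)\setminus\cluster_{l}}W_{i,j}$. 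Decomposing the flow into source-to-sink paths (discarding any cycles), I would obtain, for each boundary node $i$, a bundle of paths running from $i^{(l)}$ to $i$ of total weight $C^{(l)}_{\{0,i\}}$; splitting this bundle further, I would allocate a sub-bundle of weight exactly $2W_{i,j}$ to each individual boundary edge $\{i,j\}$, which is possible because $\sum_{j\in\mathcal{N}(i)\setminus\cluster_{l}} 2W_{i,j} = C^{(l)}_{\{0,i\}}$.

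The crucial step --- and the one I expect to be the main obstacle --- is turning these uni-directional path bundles into an intra-cluster flow that matches the \emph{arbitrary} sign pattern $b_{e}$, while still respecting the capacities $W_{e}$ on the intra-cluster edges. The idea is to orient each sub-bundle according to its sign: for a boundary edge $e = \{i,j\}$ whose sign demands that the flow \emph{leave} $\cluster_{l}$ at $i$, I keep the $i^{(l)} \rightsquigarrow i$ bundle in its original forward direction; for an edge whose sign demands that the flow \emph{enter} $\cluster_{l}$ at $i$, I reverse the entire bundle so it runs $i \rightsquigarrow i^{(l)}$. Reversal does not change the set of intra-cluster edges used, only the sign of the contribution. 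On any fixed intra-cluster edge, the forward sub-bundles contribute a nonnegative amount $A$ and the reversed sub-bundles a nonnegative amount $B$ (in the edge's positive orientation), and the undisturbed flow had $A + B \le W_{e}$; hence after reversal the net value is $A - B$ with $|A-B| \le A + B \le W_{e}$, so the capacity constraint is preserved for \emph{every} sign pattern. This observation is exactly what the factor $2$ in \eqref{equ_boundary_capacity} is engineered to accommodate.

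It then remains to check the bookkeeping. Summing the (oriented) sub-bundles over all boundary edges of $\cluster_{l}$ yields an intra-cluster flow; adding the declared boundary values $f_{e} = b_{e}2W_{e}$ gives a global flow $f$. At an unlabeled boundary node $i$, each forward sub-bundle delivers its $2W_{e}$ into $i$ and the matching boundary edge carries it out, while each reversed sub-bundle draws $2W_{e}$ out of $i$ supplied by the incoming boundary edge, so conservation holds; at unlabeled interior nodes conservation is inherited from the path decomposition; and the only node permitted a net surplus or deficit is the labeled source $i^{(l)}$, which is excluded from the conservation requirement in \eqref{equ_resolv_cluster}. A boundary node coinciding with $i^{(l)}$, as well as clusters containing several labeled nodes, are handled by the same argument, since conservation is only \emph{required} at unlabeled nodes. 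Carrying this out for every cluster produces a flow $f$ satisfying all three conditions of \eqref{equ_resolv_cluster} for the given sign pattern; since the pattern was arbitrary, $\trainingset$ resolves $\partition$.
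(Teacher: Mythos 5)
Your proof is correct, but it follows a genuinely different route from the paper's. The paper argues non-constructively in two steps: it first uses the max-flow/min-cut theorem to convert the hypothesized flow value in $\graph_{l}$ into a family of cut inequalities --- for every subset $\mathcal{A} \subseteq \cluster_{l} \setminus \{i^{(l)}\}$, the intra-cluster cut capacity of $\mathcal{A}$ must be at least twice the capacity of its edges leaving $\cluster_{l}$ (this is \eqref{equ_condition_minimum_flow}) --- and then invokes Hoffman's circulation theorem to conclude that these cut conditions guarantee, for every sign pattern $\{b_{e}\}_{e \in \boundary}$, the existence of a flow satisfying \eqref{equ_resolv_cluster}. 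You instead construct the required flow explicitly: you observe that the hypothesized flow saturates every augmented edge $\{0,i\}$, decompose it into source-to-sink path bundles, split each bundle according to the boundary capacities $2W_{i,j}$ in \eqref{equ_boundary_capacity}, and handle an arbitrary sign pattern by reversing the sub-bundles whose boundary edge demands inflow; the inequality $|A-B| \le A+B \le W_{e}$ is exactly what makes a single flow certificate serve all $2^{|\boundary|}$ sign patterns simultaneously, and it makes transparent why the factor $2$ in \eqref{equ_boundary_capacity} is needed. Your argument is more elementary and self-contained (flow decomposition replaces both max-flow/min-cut and Hoffman's theorem, whose application to the mixed fixed-value/capacitated/free-node setting the paper leaves unverified), and the decoupling-across-clusters observation and the reversal trick make it fully constructive; the paper's argument is shorter and yields the cut characterization \eqref{equ_condition_minimum_flow} as a byproduct of independent interest, e.g.\ for the SBM analysis following the proposition.
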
 
\begin{proof} 
Consider a particular cluster $\cluster_{l}$ containing the labeled node $i^{(l)}  \in \cluster_{l} \cap \trainingset$. 
By assumption, the associated subgraph $\graph_{l}$ supports a network flow between $i^{(l)}$ and the extra 
node $0$ of value $2 \sum_{e \in \partial \cluster_{l}} W_{e}$. The max-flow/min-cut theorem (see \cite[Thm. 6.1.6]{JungnckelBook}) 
implies that this flow value can only be achieved if, for each subset $\mathcal{A} \subseteq \cluster_{l} \setminus \{ i^{(l)} \}$,  
the total capacity of the edges $\{ \{i,j\}\!\in\!\edges: i\!\in\!\mathcal{A}, j\!\in\!\cluster_{l}\!\setminus\!\mathcal{A} \}$ 
is at least as large as twice the total capacity of the edges $\{ \{i,j\} \in \edges: i \in \mathcal{A}, j\!\in\!\nodes\!\setminus\!\cluster_{l} \}$, 
\begin{equation}
\label{equ_condition_minimum_flow} 
\sum_{ \{i,j\} \in \edges: i \in \mathcal{A}, j \in \cluster_{l} \setminus \mathcal{A} } \hspace*{-3mm} W_{i,j}\!\geq\!2 \sum_{ \{i,j\} \in \edges: i \in \mathcal{A}, j \in \nodes \setminus \cluster_{l}}  \hspace*{-3mm} W_{i,j}. 
\end{equation} 
The validity of \eqref{equ_condition_minimum_flow}, for each cluster $\cluster_{l}$ of the partition $\partition$, 
implies via Hoffman's circulation theorem \cite[Thm. 10.2.7]{JungnckelBook} the existence of a network flow 
satisfying the requirements \eqref{equ_resolv_cluster} for the training set $\trainingset$ to resolve the partition $\partition$. 
\end{proof} 
In Section \ref{sec_two_cluster}, we will demonstrate the usefulness of Proposition \ref{lem_flow_cond} for certifying the accuracy 
of Alg.\ \ref{alg_sparse_label_propagation_centralized}. Moreover, we can combine Proposition \ref{lem_flow_cond} with existing 
results from graph sampling to characterize TV minimization for empirical graphs that can be well approximated by an SBM. In particular, 
\cite[Theorem 2.1]{Karger1999} allows us to verify if the conditions of Proposition \ref{lem_flow_cond} are satisfied (with high probability) 
based on the expected values of cuts in the graph $\graph$. According to Proposition \ref{lem_flow_cond}, TV minimization is accurate 
if there exists a flow from the labeled nodes $\trainingset \cap \cluster_{l}$ in each cluster to its boundary $\partial \cluster_{l}$ of value 
$2 \sum_{e \in \partial \cluster_{l}} W_{e}$. A simple argument based on \cite[Theorem 2.1]{Karger1999} shows that this condition is 
satisfied with high probability for an SBM (with cluster sizes not too small), whenever 
\begin{equation}
\label{equ_condition_SBM}
| \trainingset \cap \cluster_{l} |  p_{\rm in} \gg 2 p_{\rm out}  (|\nodes| - |\cluster_{l}|). 
\end{equation} 
Here, $p_{\rm in}$ ($p_{\rm out}$) denotes the probability that two nodes from the same cluster 
(from different clusters) are connected by an edge. Condition \eqref{equ_condition_SBM} allows 
to characterize parameter regimes for the SBM such that TV minimization can recover piece-wise 
constant signals from a given number of labeled nodes. We will verify condition \eqref{equ_condition_SBM} empirically 
in Section \ref{sec_SBM_graph}. 

Proposition \ref{main_thm_approx_sparse} and Corollary \ref{corr_exact_recovery} requires each cluster $\cluster_{l}$ in \eqref{equ_def_clustered_signal_model} 
to contain at least one labeled node $i\!\in\!\trainingset$ (see Definition \ref{def_sampling_set_resolves}). 
However, even if this condition is not met we still can say something about the solutions 
of TV minimization \eqref{equ_min_constr}. In particular, the optimality condition \eqref{equ_two_coupled_conditions} 
requires any solution $\hat{\vx}$ of TV minimization \eqref{equ_min_constr} to be constant around labeled nodes $i \in \trainingset$. 
The graph signal $\hat{\vx}$ can only change along edges $e=\{i,j\} \in \edges$ which are saturated, i.e., $|\hat{y}_{e}| = 1$ 
holds for every dual solution $\hat{\vy}$ of \eqref{equ_dual_SLP} (see Corollary \ref{cor_flow_satur_constant}).

\section{Numerical Experiments}
\label{experimental_results}
 
We assess the statistical and computational performance of Alg.\ \ref{alg_sparse_label_propagation_centralized} 
using numerical experiments involving synthetic and ``real-world'' data. 
The first experiment discussed in Section \ref{sec_two_cluster} revolves around an ensemble of synthetic datasets 
whose empirical graphs consist of two clusters with varying level of connectivity. We verify the recovery condition provided by 
Proposition \ref{main_thm_approx_sparse} by computing the recovery error of Alg. \ref{alg_sparse_label_propagation_centralized} 
as the cluster connectivity varies. Section \ref{sec_SBM_graph} discusses the application of TV minimization to a synthetic empirical graph 
generated using an SBM. 
In Section \ref{sec_LFR_graph}, we verify the scalability of Alg.\ \ref{alg_sparse_label_propagation_centralized} by 
implementing its message passing formulation Alg.\ \ref{sparse_label_propagation_mp} in a big data framework. 
Finally, in Section \ref{sec_road_network}, we discuss the application of Alg.\ \ref{alg_sparse_label_propagation_centralized} 
to data obtained from a Danish road network. 

To allow for reproducible research, we have made the source code for the numerical experiments discussed in 
Section \ref{sec_two_cluster} and Section \ref{sec_SBM_graph} available at \url{https://github.com/alexjungaalto/ResearchPublic/tree/master/TVMin}. 
The source code for the numerical experiments discussed in Section \ref{sec_LFR_graph} and Section \ref{sec_road_network} 
can be found  at \url{https://github.com/Dru-Mara/GraphSignalRecovery}. 

\subsection{Two-Cluster Graph} 
\label{sec_two_cluster}

In this experiment, we generate an empirical graph $\graph$ by first generating two clusters 
$\cluster_{1}$ and $\cluster_{2}$ of size $N/2 = 100$ drawn from an Erd{\"o}s-Renyi ensemble with 
varying edge occurrence probability. We then connected those two clusters by randomly placing 
edges between them. The resulting empirical graph $\graph$ is then assigned a piece-wise constant 
graph signal $\vx$ of the form \eqref{equ_def_clustered_signal_model} using the partition 
$\partition=\{\cluster_{1},\cluster_{2} \}$. We apply Alg.\ \ref{alg_sparse_label_propagation_centralized} 
to recover the graph signal $\vx$ based only on its values at the nodes in the training set $\trainingset$ 
which contains exactly one node from each of the two clusters, i.e., $|\trainingset|\!=\!2$. 

Using Proposition \ref{lem_flow_cond}, we can verify if the partition $\partition=\{\cluster_{1}, \cluster_{2}\}$ is resolved by the 
training set $\trainingset$ by computing, for each cluster $\cluster_{l}$ the network flow between the 
labeled node $i \in \cluster_{l} \cap \trainingset$ and the boundary $\partial \cluster_l$. Let $\rho^{(l)}$ 
denote the resulting flow value, normalized by the total weight of the boundary $2 \sum\limits_{e \in \partial \cluster_{l}} W_{e}$. 
According to Proposition \ref{lem_flow_cond}, the partition $\partition$ is resolved by $\trainingset$ if $\rho^{(l)} \geq 2$ for all $l=1,2$. 

In Fig.\ \ref{fig_NMSEconnect}, we depict the normalized mean squared error (NMSE) 
$\varepsilon \defeq \| \vx\!-\!\tilde{\vx}^{(k)} \|^{2}_{2} / \| \vx^{(k)} \|^{2}_{2}$ incurred by 
Alg.\ \ref{alg_sparse_label_propagation_centralized} (averaged over $10$ i.i.d.\ simulation runs) 
for varying connectivity, as measured by the empirical average $\bar{\rho}$ of $\rho^{(1)}$ and $\rho^{(2)}$ 
(which have the same distribution due to the symmetric graph construction). The results in Fig.\ \ref{fig_NMSEconnect} 
agrees with our analysis (see Proposition \ref{lem_flow_cond} and Proposition \ref{main_thm_approx_sparse}) 
which predicts that TV minimization Alg.\ \ref{alg_sparse_label_propagation_centralized} is accurate (incurring small NMSE) 
if the cluster $\cluster_{1}$ and $\cluster_{2}$ are well connected such that $\rho^{(1)}, \rho^{(2)} \geq 2$. 

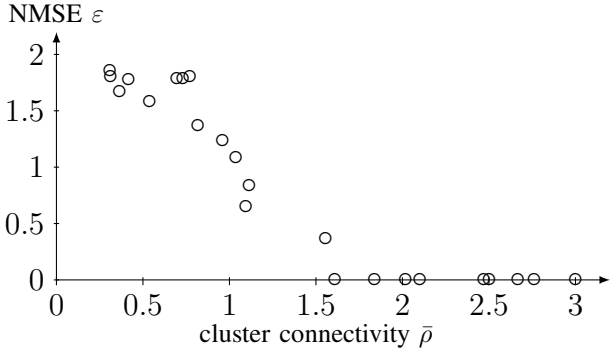
\begin{figure}[htbp]
\begin{center}
\begin{tikzpicture}
 \tikzset{x=2.3cm,y=1.5cm,every path/.style={>=latex},node style/.style={circle,draw}}
    \csvreader[ head to column names,%
                late after head=\xdef\aold{\a}\xdef\bold{\b},,%
                after line=\xdef\aold{\a}\xdef\bold{\b}]%
                {MSEoverBoundary_18-Oct-2019.csv}{}
                {\draw [line width=0.0mm] (\aold, \bold) (\a,\b) node {\large $\circ$};
    }
          \draw[->] (0,0) -- (3.2,0);
 
      \node [] at (1.5,-0.5) {\centering cluster connectivity $\bar{\rho}$};
      \draw[->] (0,0) -- (0,2.2);
      
      \node [anchor=south] at (0,2.2) {NMSE $\varepsilon$};
            \foreach \label/\labelval in {0/$0$,0.5/$0.5$,1/$1$,1.5/$1.5$,2/$2$}
        { 
          \draw (1pt,\label) -- (-1pt,\label) node[left] {\large \labelval};
        }
        
              \foreach \label/\labelval in {0/$0$,0.5/$0.5$,1/$1$,1.5/$1.5$,2/$2$,2.5/$2.5$,3/$3$}
        { 
          \draw (\label,1pt) -- (\label,-2pt) node[below] {\large \labelval};
        }
\end{tikzpicture}
        \vspace*{-4mm}
\end{center}
  \caption{NMSE achieved by Alg.\ \ref{alg_sparse_label_propagation_centralized} for a two-cluster graph 
  (see Fig.\ \ref{fig_clustered_graph_signal}) with varying connectivity $\bar{\rho}$. 
   }
  \label{fig_NMSEconnect}
  \vspace*{-3mm}
\end{figure}

As indicated in Fig.\ \ref{fig_NMSEconnect}, TV minimization fails to recover piece-wise constant graph signals 
$\vx$ of the form \eqref{equ_def_clustered_signal_model} if the cluster connectivity $\bar{\rho}$ is too small. 
We have depicted, for one particular realization of the two-cluster graph $\graph$ with connectivity $\bar{\rho} \approx 0.26$, 
the graph signal estimate obtained from TV minimization via Alg. \ref{alg_sparse_label_propagation_centralized} in Fig.\ \ref{fig_TVminlowcond}. 
Clearly, in this case TV minimization fails to correctly identify the underlying cluster structure and assigns many nodes to the 
signal values of the wrong cluster. We have also included the estimates obtained from nLasso \eqref{equ_nLasso} for different choices 
fo the parameter $\lambda$. According to Fig.\ \ref{fig_TVminlowcond}, the nLasso estimates tend to be forced towards zero while TV minimization 
results in signal values more close to the initial labels $x_{1} = 1/10$ and $x_{200} = -1/10$. 
\begin{figure}[htbp]
\hspace*{-5mm}
\begin{tikzpicture}
 \tikzset{x=0.035cm,y=20cm,every path/.style={>=latex},node style/.style={circle,draw}}
    \csvreader[ head to column names,%
                late after head=\xdef\aold{\Node}\xdef\eold{\TVMin}\xdef\fold{\true}\xdef\gold{\lambdavala}\xdef\kold{\lambdavalb}\xdef\mold{\lambdavalc},,%
                after line=\xdef\aold{\Node}\xdef\eold{\TVMin}\xdef\fold{\true}\xdef\gold{\lambdavala}\xdef\kold{\lambdavalb}\xdef\mold{\lambdavalc}]%
                {TVMinNLasso_20-Oct-2019.csv}{}
                {\draw [line width=0mm] (\aold, \eold)  (\Node,\TVMin) node  {$\circ$};
                \draw [line width=0.2mm] (\aold, \fold) (\Node,\true) node {$\times$};
                  \draw [line width=0.2mm] (\aold, \gold) (\Node,\lambdavala) node {$\star$};
                      \draw [line width=0.2mm] (\aold, \kold) (\Node,\lambdavalb) node {$\otimes$};
                                  \draw [line width=0.2mm] (\aold, \mold) (\Node,\lambdavalc) node {$\bigtriangleup$};
    }
          \draw[->] (0,0) -- (205,0);
 
      \node [anchor=west] at (205,0) {node $i$};
      \draw[->] (0,-0.12) -- (0,0.12);
      
      \node [anchor=south] at (0,0.12) {$\hat{x}_{i}$};
            \foreach \label/\labelval in {-0.1/$-0.1$,-0.05/$-0.05$,0/$0$,0.05/$0.05$,0.1/$0.1$}
        { 
          \draw (1pt,\label) -- (-1pt,\label) node[left]  {\labelval};
        }
        
              \foreach \label/\labelval in {0/$0$,	40/$40$,80/$80$,120/$120$,160/$160$,200/$200$}
        { 
          \draw (\label,1pt) -- (\label,-2pt) node[below] { \labelval};
        }
\end{tikzpicture}
        \vspace*{-4mm}
  \caption{Graph signal estimates delivered by Alg.\ \ref{alg_sparse_label_propagation_centralized} for a two-cluster graph 
  (see Fig.\ \ref{fig_clustered_graph_signal}) with connectivity $\bar{\rho}\approx0.26$. True graph signal (``$\times$'') is 
  piece-wise constant over clusters $\cluster_{1}=\{1,2,\ldots,100\}, \cluster_{2}=\{101,102,\ldots,200\}$. 
  The signal is estimated from its values on $\trainingset\!=\!\{1,200\}$ using TV minimization \eqref{equ_min_constr} (``$\circ$'') and 
  nLasso \eqref{equ_nLasso} for $\lambda\!=\!10^{-4}$ (``$\star$``), $\lambda=10^{-3}$ (``$\otimes$'') and $\lambda=10^{-2}$ (``$\bigtriangleup$''). 
   }
  \label{fig_TVminlowcond}
  \vspace*{-3mm}
\end{figure}

\subsection{Stochastic Block Model}
\label{sec_SBM_graph}

In this experiment, we generate an empirical graph $\graph$ using the SBM \cite{AbbeSBM2018}. 
The graph $\graph$ consists of three clusters $\cluster_{1}, \cluster_{2}$ and $\cluster_{3}$, each 
consisting of $10$ nodes. An edge is placed between nodes $i,j$ with probability $p_{\rm in}$ 
if they are in the same cluster and with probability $p_{\rm out}$ if they are from different clusters. 

The empirical graph $\graph$ is then assigned a piece-wise constant 
graph signal $\vx$ (see \eqref{equ_def_clustered_signal_model}) using the partition 
$\partition=\{\cluster_{1},\cluster_{2},\cluster_{3} \}$. We apply Alg.\ \ref{alg_sparse_label_propagation_centralized} 
to recover the graph signal $\vx$ from its values at the nodes in the training set $\trainingset$ 
which contains exactly five nodes from each cluster such that $|\trainingset|\!=\!15$. 

The (non-rigorous) condition \eqref{equ_condition_SBM} suggests that  Alg.\ \ref{alg_sparse_label_propagation_centralized} delivers 
an accurate estimate of $\vx$ whenever $p_{\rm in }/p_{\rm out} \gg (2 / |\trainingset \cap \cluster_{l}| ) (|\nodes| - |\cluster_{l}|)$ 
for all $l=1,2,3$. Inserting the particular SBM parameters used in this experiment yields the condition $p_{\rm in }/p_{\rm out} \gg 8$. 

In Fig.\ \ref{fig_NMSEconnect_SBM}, we depict the normalized mean squared error (NMSE) 
$\varepsilon \defeq \| \vx\!-\!\tilde{\vx}^{(k)} \|^{2}_{2} / \| \vx^{(k)} \|^{2}_{2}$ incurred by 
Alg.\ \ref{alg_sparse_label_propagation_centralized} (averaged over $100$ i.i.d.\ simulation runs) 
for varying ratio $p_{\rm in}/p_{\rm out}$ of SBM edge probabilities $p_{\rm in}, p_{\rm out}$. The 
results in Fig.\ \ref{fig_NMSEconnect_SBM} agree with the (non-rigorous) condition $p_{\rm in }/p_{\rm out} \gg 8$ 
such that TV minimization correctly recovers a piece-wise constant graph signal from few labeled nodes. 

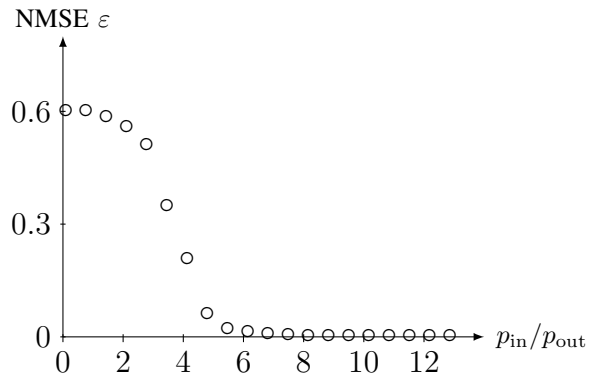
\begin{figure}[htbp]
\begin{center}
\begin{tikzpicture}
 \tikzset{x=0.4cm,y=5cm,every path/.style={>=latex},node style/.style={circle,draw}}
    \csvreader[ head to column names,%
                late after head=\xdef\aold{\a}\xdef\bold{\b},,%
                after line=\xdef\aold{\a}\xdef\bold{\b}]%
                {MSEoverSBMParam_20-Oct-2019.csv}{}
                {\draw [line width=0.0mm] (\aold, \bold) (\a,\b) node {\large $\circ$};
    }
          \draw[->] (0,0) -- (14,0);

      \draw[->] (0,0) -- (0,0.8);
      
      \node [anchor=south] at (0,0.8) {NMSE $\varepsilon$};
            \foreach \label/\labelval in {0/$0$,0.3/$0.3$,0.6/$0.6$}
        { 
          \draw (1pt,\label) -- (-1pt,\label) node[left] {\large \labelval};
        }
        
              \foreach \label/\labelval in {0/$0$,2/$2$,4/$4$,6/$6$,8/$8$,10/$10$,12/$12$}
        { 
          \draw (\label,1pt) -- (\label,-2pt) node[below] {\large \labelval};
        }
         \node [anchor=west] at (14.1,0) {\centering $p_{\rm in}/p_{\rm out}$};
        \vspace*{-3mm}
\end{tikzpicture}
        \vspace*{-3mm}
\end{center}
  \caption{NMSE achieved by Alg.\ \ref{alg_sparse_label_propagation_centralized} for an empirical graph obtained from an SBM 
  with varying edge probabilities $p_{\rm out}$ and $p_{\rm in}$. 
   }
  \label{fig_NMSEconnect_SBM}
  \vspace*{-3mm}
\end{figure}

\subsection{Big Data Framework Implementation}
\label{sec_LFR_graph}

We have implemented Alg.\ \ref{sparse_label_propagation_mp} using the higher-level programming interface 
\textsc{GraphX} \cite{xin2013graphx} for the large-scale distributed computation framework \textsc{Spark} \cite{zaharia2010spark}. 
The central concept of this framework is the distributed data structure (RDD) which is used to represent 
graph nodes, edges and associated signal values. Computations on graph data amount to transformations 
applied to RDDs. These RDD transformations are executed using efficient low-level distributed computing primitives \cite{zaharia2010spark}. 

Using this implementation, we applied \ref{sparse_label_propagation_mp} to synthetic data obtained from the 
Lancichinetti-Fortunato-Radicchi (LFR) network model \cite{PhysRevE.78.046110}. The probabilistic LFR model is widely used for 
benchmarking network algorithms \cite{PhysRevE.78.046110} and aims at imitating some key characteristics of ``real-world'' 
networks such as the internet \cite{NewmannBook}. 


In order to study the scalability of Alg.\ \ref{sparse_label_propagation_mp}, we generated empirical 
graphs (using the LFR model) of varying size. We then measured the execution time of Alg.\ \ref{sparse_label_propagation_mp} 
for a fixed number of $100$ iterations. 

As indicated by Fig.\ \ref{fig_scalability1}, the execution time scales linearly with the size (number of nodes) 
of the empirical graph. Fig.\ \ref{fig_scalability1} also illustrates the effect of adding worker nodes to the cluster. In particular, for 
an empirical graph with size $|\nodes|\!=\!10^5$, we determined the execution time of Alg.\ \ref{sparse_label_propagation_mp} 
when the number of worker nodes is increased from $1$ up to $8$. As expected, the execution time decreases with 
increasing number of worker nodes. This decrease in execution time is, however, not exactly proportional to the increase of 
worker nodes due to communication overhead and data fragmentation associated with parallel computation frameworks \cite{Amdahl67}.

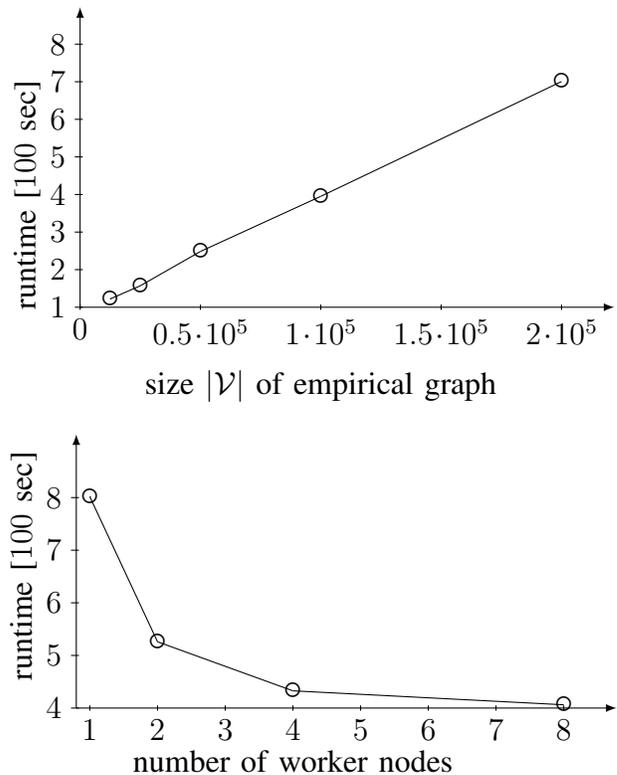
\begin{figure}[htbp]
\hspace*{2mm}
\begin{minipage}[t][][t]{\columnwidth}
\vskip0pt
 \begin{tikzpicture}
   \newcommand{\ymax}{8}
   \newcommand{\ymin}{1} 
    \newcommand{\xmin}{0} 
   \newcommand{\xmax}{2} 
 \tikzset{x=3.2cm,y=0.5cm,every path/.style={>=latex},node style/.style={circle,draw}}
    \csvreader[ head to column names,%
                late after head=\xdef\aold{\a}\xdef\bold{\b},%
                after line=\xdef\aold{\a}\xdef\bold{\b}]%
                {Exectime.csv}{}{%
    \draw  (\aold, \bold) -- (\a,\b)  node {\Large $\circ$};
   }
   \draw[->] (\xmin,\ymin) -- ([xshift=20pt]\xmax,\ymin); 
   \node [] at ([yshift=-30pt]\xmax/2,\ymin) {\centering \large size $|\nodes|$ of empirical graph};
  \draw[->] (\xmin,\ymin) -- ([yshift=14pt]\xmin,\ymax) ;  
  \node [rotate=90] at ([xshift=-20pt]\xmin,\ymax/2) {\centering \large runtime [$100$ sec]};
            \foreach \label/\labelval in {1/$1$,2/$2$,3/$3$,5/$5$,4/$4$,6/$6$,7/$7$,8/$8$}  
        { 
          \draw (1pt,\label) -- (-2pt,\label) node[left] {\large \labelval};
        }
              \foreach \label/\labelval in {0/$0$,0.5/$0.5\!\cdot\!10^5$,1/$1\!\cdot\!10^5$,1.5/$1.5\!\cdot\!10^5$,2/$2\!\cdot\!10^5$}    
        { 
          \draw ([yshift=1pt]\label,\ymin) -- ([yshift=-1pt]\label,\ymin) node[below] {\large \labelval};
        }
\end{tikzpicture}
\end{minipage}

\vspace*{1mm}

\hspace*{1.5mm}
\begin{minipage}[t][][t]{\columnwidth}
\vskip0pt
\vspace*{2mm}
 \begin{tikzpicture}
   \newcommand{\ymax}{8.5}
   \newcommand{\ymin}{4} 
    \newcommand{\xmin}{0.8} 
   \newcommand{\xmax}{8} 
 \tikzset{x=0.9cm,y=0.7cm,every path/.style={>=latex},node style/.style={circle,draw}}
    \csvreader[ head to column names,%
                late after head=\xdef\aold{\a}\xdef\bold{\b},%
                after line=\xdef\aold{\a}\xdef\bold{\b}]%
                {ExectimeWorkers.csv}{}{%
    \draw  (\aold, \bold) -- (\a,\b)  node {\Large $\circ$};
   }
   \draw[->] (\xmin,\ymin) -- ([xshift=20pt]\xmax,\ymin); 
   \node [] at ([yshift=-20pt]\xmax/2,\ymin) {\centering  \large number of worker nodes};
  \draw[->] (\xmin,\ymin) -- ([yshift=14pt]\xmin,\ymax) ;  
  \node [rotate=90,right] at ([xshift=-20pt,yshift=\ymin/2]\xmin,\ymax/2) {\centering \large runtime [$100$ sec]};
            \foreach \label/\labelval in {5/$5$,4/$4$,6/$6$,7/$7$,8/$8$}  
        { 
          \draw ([xshift=1pt]\xmin,\label) -- ([xshift=-2pt]\xmin,\label) node[left] {\large \labelval};
        }
              \foreach \label/\labelval in {1/$1$,2/$2$,3/$3$,4/$4$,5/$5$,6/$6$,7/$7$,8/$8$}    
        { 
          \draw ([yshift=1pt]\label,\ymin) -- ([yshift=-1pt]\label,\ymin) node[below] {\large \labelval};
        }
\end{tikzpicture}
\end{minipage} \caption{(Top) Runtime of Alg.\ \ref{sparse_label_propagation_mp} as size $|\nodes|$ of empirical graph increases. 
(Bottom) Runtime for varying number of ``worker nodes''. }
    \label{fig_scalability1}
\end{figure}

\subsection{Road Network}
\label{sec_road_network}

In this experiment we consider a dataset with empirical graph $\graph_3=(\nodes,\edges,\mathbf{W})$ representing 
a road network in North Jutland (Denmark) 
\cite{Lichman:2013,KaulYangJensen2013}. 
The edges $\edges$ of the graph $\graph_3$ represent segments of road, and the nodes $\nodes$ are intersections or terminations of roads. 
The empirical graph $\graph_3$ contains  $\nrnodes \approx 4 \cdot 10^5$ nodes and 
$\nredges \approx 3.7 \cdot 10^5$ edges. The edge weights $W_{i,j}$ are obtained from the great-circle distances between 
intersections, measured in kilometres. 

Each node $i\!\in\!\nodes$ of $\graph_{3}$ is labeled with the elevation $x_{i}\!\in\!\mathbb{R}$ 
(relative to sea level) of the corresponding location in the road network. 
We construct a training set $\trainingset$ by selecting $|\nodes|/10$ nodes of  
$\graph_3$ uniformly at random. 
Based on the labels of the nodes in the training set, we recover (predict) the 
labels on the remaining nodes using Alg.\ \ref{alg_sparse_label_propagation_centralized}, 
nLasso \eqref{equ_nLasso} and LP \eqref{equ_LP_problem}. 
The results are presented in Fig.\ \ref{fig_conv_road}, 
which depicts the NMSE achieved by the different algorithms 
after a certain number $k$ of iterations (the iterations of the three methods having similar 
computational complexity). 

As indicated by Fig.\ \ref{fig_conv_road}, TV minimization Alg.\ \ref{alg_sparse_label_propagation_centralized} 
converges rapidly to a solution with smaller NMSE than nLasso (with manually tuned $\lambda$ in \eqref{equ_nLasso}) 
and LP \eqref{equ_LP_problem}. 



\begin{figure}[htbp]
 \begin{tikzpicture}
 \newcommand{\ymin}{0}
  \newcommand{\ymax}{1}
 \tikzset{x=0.025cm,y=4.5cm,every path/.style={>=latex},node style/.style={circle,draw}}
    \csvreader[ head to column names,%
                late after head=\xdef\aold{\a}\xdef\bold{\SLP}\xdef\cold{\LP}\xdef\dold{\NWL},%
                after line=\xdef\aold{\a}\xdef\bold{\SLP}\xdef\cold{\LP}\xdef\dold{\NWL}]%
            {RoadNetResults.csv}{}{
    \draw  (\aold, \bold) -- (\a,\SLP)  node {$\times$};
    \draw (\aold, \cold) -- (\a,\LP)  node {$\circ$}; 
    \draw (\aold, \dold) -- (\a,\NWL) ; 
   }
   \node[ align=left,   below] at (150,0.8)   {{\large $-$ nLasso \eqref{equ_nLasso}}\\[1mm]{\large $\circ$ LP \eqref{equ_LP_problem}}\\[1mm]{\large $\times$ Alg.\ \ref{alg_sparse_label_propagation_centralized}}} ; 
   \draw[->] (0,\ymin) -- (320,\ymin); 
   \node [] at ([yshift=-30pt]150,\ymin) {\centering \large number $k$ of iterations};
  \draw[->] (0,\ymin) -- (0,\ymax) ;  
  \node [anchor=south west] at (-5,1) {$\varepsilon \times 10$ };
            \foreach \label/\labelval in {0/$0$,0.1/$1$,0.2/$2$,0.3/$3$,0.4/$4$,0.5/$5$,0.6/$6$,0.7/$7$,0.8/$8$,0.9/$9$}
        { 
          \draw (1pt,\label) -- (-5pt,\label) node[left] {\large \labelval};
        }
              \foreach \label/\labelval in {1/$1$,50/$50$,100/$100$,150/$150$,200/$200$,250/$250$,300/$300$}
        { 
          \draw ([yshift=1pt]\label,\ymin) -- ([yshift=-1pt]\label,\ymin) node[below] {\large \labelval};
        }
\end{tikzpicture}
\vspace*{-2mm}
 \caption{NMSE $\varepsilon$ incurred by learning methods applied to empirical graph $\graph_3$ for increasing number of iterations. 
 }
\label{fig_conv_road}
\end{figure}
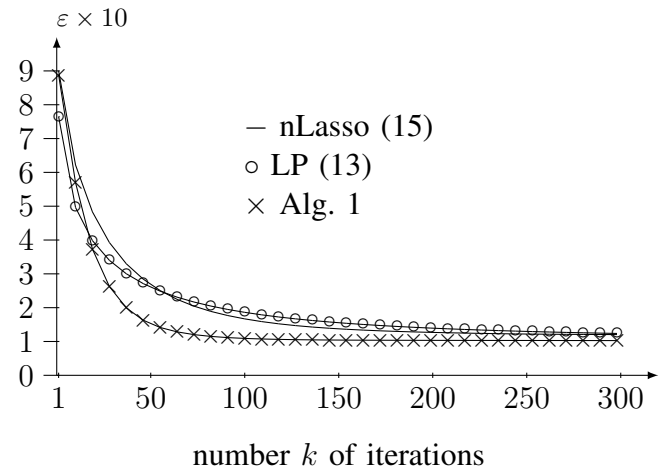

\section{Conclusion}
\label{sec5_conclusion}

We have offered an analysis of the computational and statistical properties of TV minimization 
from a network flow perspective. Using a network flow perspective allowed us to derive conditions 
on network structure and available label information such that TV minimization accurately learns 
piece-wise constant graph signals. We have also obtained a novel interpretation of primal-dual 
methods for TV minimization as distributed methods for network (flow) optimization. 

Several topics for follow-up research can be identified. First, we plan to extend our network-flow 
based analysis of TV minimization to the closely related nLasso problem. This seems to be quite 
straightforward and might require merely a minor modification of the network flow constraints used 
to measure cluster connectivity. Regarding computational properties of TV based methods, we consider 
extending work on partial linear convergence of non-smooth Lasso type problems to TV minimization and 
nLasso. It turns out that such problems can be solved by iterative methods that converge linearly 
(at a geometric rate) up to a sub-optimality on the order of the intrinsic estimation error, which cannot be overcome 
by any algorithm. Using the duality of TV minimization and network flows, such results would have immediate 
consequences for the complexity of network flow (and clustering) problems. As to the statistical properties of TV 
minimization, it would be interesting to extend our analysis from piece-wise constant to piece-wise smooth 
graph signals. 

We expect our work to initiate cross-fertilization between network science and compressive 
graph signal processing. Convex methods for TV minimization are computationally 
attractive methods for handling massive networks and it would be interesting to investigate if they 
might outperform state-of-the art network algorithms in some settings. 

On the other hand, the duality of TV minimization and network flow optimization suggests new routes 
for combining primal-dual methods for TV minimization with existing methods for clustering and computing 
(approximating) maximum network flows. In particular, we might use maximum flow methods to (approximately) 
solve the dual of TV minimization in order to obtain an initial solution for TV minimization via the primal-dual 
optimality condition presented in Section \ref{sec_dual_TV_Min}. The initial estimates for the solutions of the 
primal and dual problem might then, in turn, be used to warm-start the primal-dual iterations underlying Alg.\ \ref{alg_sparse_label_propagation_centralized}. 

\vspace*{-2mm}
\section*{Acknowledgement}
We would like to acknowledge support from the Vienna Science Fund (WWTF) Grant ICT15-119 and US ARO grant W911NF-15-1-0479.

\vskip 0.2in
\bibliographystyle{IEEEtran}
\bibliography{/Users/junga1/Literature.bib}

\begin{thebibliography}{10}
\providecommand{\url}[1]{#1}
\csname url@samestyle\endcsname
\providecommand{\newblock}{\relax}
\providecommand{\bibinfo}[2]{#2}
\providecommand{\BIBentrySTDinterwordspacing}{\spaceskip=0pt\relax}
\providecommand{\BIBentryALTinterwordstretchfactor}{4}
\providecommand{\BIBentryALTinterwordspacing}{\spaceskip=\fontdimen2\font plus
\BIBentryALTinterwordstretchfactor\fontdimen3\font minus
  \fontdimen4\font\relax}
\providecommand{\BIBforeignlanguage}[2]{{%
\expandafter\ifx\csname l@#1\endcsname\relax
\typeout{** WARNING: IEEEtran.bst: No hyphenation pattern has been}%
\typeout{** loaded for the language `#1'. Using the pattern for}%
\typeout{** the default language instead.}%
\else
\language=\csname l@#1\endcsname
\fi
#2}}
\providecommand{\BIBdecl}{\relax}
\BIBdecl

\bibitem{NNSPSampta2017}
A.~Jung, A.~Heimowitz, and Y.~C. Eldar, ``The network nullspace property for
  compressed sensing over networks,'' in \emph{Sampta}, Tallinn, Estonia, Jul.
  2017.

\bibitem{Chen2015}
S.~Chen, A.~Sandryhaila, J.~M.~F. Moura, and J.~Kova{\v c}evi{\'c}, ``Signal
  recovery on graphs: Variation minimization,'' \emph{IEEE Trans. Signal
  Processing}, vol.~63, no.~17, pp. 4609--4624, Sept. 2015.

\bibitem{ShiMalik2000}
J.~Shi and J.~Malik, ``Normalized cuts and image segmentation,'' \emph{IEEE
  Trans. Pattern Anal. Mach. Intell.}, vol.~22, no.~8, pp. 888--905, Aug. 2000.

\bibitem{NewmannBook}
M.~E.~J. Newman, \emph{Networks: An Introduction}.\hskip 1em plus 0.5em minus
  0.4em\relax Oxford Univ. Press, 2010.

\bibitem{SemiSupervisedBook}
O.~Chapelle, B.~Sch{\"o}lkopf, and A.~Zien, Eds., \emph{Semi-Supervised
  Learning}.\hskip 1em plus 0.5em minus 0.4em\relax Cambridge, Massachusetts:
  The MIT Press, 2006.

\bibitem{LauritzenGM}
S.~L. Lauritzen, \emph{Graphical Models}.\hskip 1em plus 0.5em minus
  0.4em\relax Oxford, UK: Clarendon Press, 1996.

\bibitem{BishopBook}
C.~M. Bishop, \emph{Pattern Recognition and Machine Learning}.\hskip 1em plus
  0.5em minus 0.4em\relax Springer, 2006.

\bibitem{koller2009probabilistic}
D.~Koller, N., and Friedman, \emph{Probabilistic Graphical Models: Principles
  and Techniques}, ser. Adaptive computation and machine learning.\hskip 1em
  plus 0.5em minus 0.4em\relax MIT Press, 2009.

\bibitem{DistrOptStatistLearningADMM}
S.~Boyd, N.~Parikh, E.~Chu, B.~Peleato, and J.~Eckstein, \emph{{D}istributed
  {O}ptimization and {S}tatistical {L}earning via the {A}lternating {D}irection
  {M}ethod of {M}ultipliers}.\hskip 1em plus 0.5em minus 0.4em\relax Hanover,
  MA: Now {P}ublishers, 2010, vol.~3, no.~1.

\bibitem{belkin2004regularization}
M.~Belkin, I.~Matveeva, and P.~Niyogi, ``Regularization and semi-supervised
  learning on large graphs,'' in \emph{COLT}, vol. 3120.\hskip 1em plus 0.5em
  minus 0.4em\relax Springer, 2004, pp. 624--638.

\bibitem{NSZ09}
B.~Nadler, N.~Srebro, and X.~Zhou, ``Statistical analysis of semi-supervised
  learning: The limit of infinite unlabelled data,'' in \emph{Advances in
  Neural Information Processing Systems 22}, 2009, pp. 1330--1338.

\bibitem{elalaoui16}
A.~E. Alaoui, X.~Cheng, A.~Ramdas, M.~J. Wainwright, and M.~I. Jordan,
  ``Asymptotic behavior of $\ell_p$-based {L}aplacian regularization in
  semi-supervised learning,'' in \emph{Conf. on Learn. Th.}, June 2016, pp.
  879--906.

\bibitem{ChenClustered2016}
S.~Chen, R.~Varma, A.~Singh, and J.~Kova{\v c}evi{\'c}, ``Representations of
  piecewise smooth signals on graphs,'' in \emph{Proc. IEEE ICASSP 2016},
  Shanghai, CN, March 2016.

\bibitem{pock_chambolle}
A.~Chambolle and T.~Pock, ``A first-order primal-dual algorithm for convex
  problems with applications to imaging,'' \emph{J. Math. Imag. Vis.}, vol.~40,
  no.~1, 2011.

\bibitem{Wang2016}
Y.-X. Wang, J.~Sharpnack, A.~Smola, and R.~Tibshirani, ``Trend filtering on
  graphs,'' \emph{Journal of Machine Learning Research}, vol.~17, no. 105, pp.
  1--41, 2016.

\bibitem{fan2018}
Z.~Fan and L.~Guan, ``Approximate $\ell_{0}$-penalized estimation of
  piecewise-constant signals on graphs,'' \emph{Ann. Stat.}, vol.~46, no.~6B,
  pp. 3217--3245, 12 2018.

\bibitem{Kurland2014}
O.~Kurland, ``The cluster hypothesis in information retrieval,'' in
  \emph{Europ. Conf. on Inf. Retr.}, 2014.

\bibitem{NNSPFrontiers2018}
A.~Jung and M.~Hulsebos, ``The network nullspace property for compressed
  sensing of big data over networks,'' \emph{Front. Appl. Math. Stat.}, Apr.
  2018.

\bibitem{ComplexitySLP2018}
A.~Jung, ``On the complexity of sparse label propagation,'' \emph{Front. Appl.
  Math. Stat.}, vol.~4, p.~22, July 2018.

\bibitem{pmlr-v49-huetter16}
J.-C. H{\"u}tter and P.~Rigollet, ``Optimal rates for total variation
  denoising,'' in \emph{Annual Conference on Learning Theory}, vol.~49, Jun.
  2016.

\bibitem{WhenIsNLASSO}
A.~Jung, N.~Quang, and A.~Mara, ``When is {N}etwork {L}asso {A}ccurate?''
  \emph{Front. Appl. Math. Stat.}, vol.~3, Jan. 2018.

\bibitem{kyng2015algorithms}
R.~Kyng, A.~Rao, S.~Sachdeva, and D.~Spielman, ``Algorithms for lipschitz
  learning on graphs,'' in \emph{Conference on Learning Theory}, 2015, pp.
  1190--1223.

\bibitem{ProximalMethods}
N.~Parikh and S.~Boyd, ``Proximal algorithms,'' \emph{Foundations and Trends in
  Optimization}, vol.~1, no.~3, pp. 123--231, 2013.

\bibitem{EldarKutyniokCS}
Y.~C. Eldar and G.~Kutyniok, \emph{{C}ompressed {S}ensing: {T}heory and
  {A}pplications}.\hskip 1em plus 0.5em minus 0.4em\relax Cambridge, UK:
  Cambridge Univ. Press, 2012.

\bibitem{RauhutFoucartCS}
S.~Foucart and H.~Rauhut, \emph{A Mathematical Introduction to Compressive
  Sensing}.\hskip 1em plus 0.5em minus 0.4em\relax New York: Springer, 2012.

\bibitem{KabRau2015Chap}
M.~Kabanava and H.~Rauhut, ``Cosparsity in compressed sensing,'' in
  \emph{Compressed Sensing and Its Applications}, H.~Boche, R.~Calderbank,
  G.~Kutyniok, and J.~Vybiral, Eds.\hskip 1em plus 0.5em minus 0.4em\relax
  Springer, 2015, pp. 315--339.

\bibitem{ZhaoKaba2018}
M.~Zhao, M.~Kaba, R.~Vidal, D.~P. Robinson, and E.~Mallada, ``Sparse recovery
  over graph incidence matrices,'' \emph{arXiv}, Oct. 2018.

\bibitem{Goldfarb2009}
D.~Goldfarb and W.~Yin, ``Parametric maximum flow algorithms for fast total
  variation minimization,'' \emph{SIAM J. Sc. Comp.}, 2009.

\bibitem{Chambolle2005}
A.~Chambolle, ``Total variation minimization and a class of binary {MRF}
  models,'' in \emph{International Workshop on Energy Minimization Methods in
  Computer Vision and Pattern Recognition}, 2005, pp. 136--152.

\bibitem{Kolmogorov2004}
V.~Kolmogorov and R.~Zabih, ``What energy functions can be minimized via graph
  cuts?'' \emph{IEEE Trans. Pattern Anal. and Mach. Int.}, vol.~25, no.~2, Feb.
  2004.

\bibitem{BertsekasNetworkOpt}
D.~P. Bertsekas, \emph{Network Optimization: Continuous and Discrete
  Models}.\hskip 1em plus 0.5em minus 0.4em\relax Athena Scientific, 1998.

\bibitem{BoydConvexBook}
S.~Boyd and L.~Vandenberghe, \emph{Convex Optimization}.\hskip 1em plus 0.5em
  minus 0.4em\relax Cambridge, UK: Cambridge Univ. Press, 2004.

\bibitem{RudinNoise}
L.~Rudin, S.~Osher, and E.~Fatemi, ``Nonlinear total variation based noise
  removal algorithms,'' \emph{Phys. D}, vol.~60, no. 1-4, pp. 259--268, 1992.

\bibitem{Luxburg2007}
U.~von Luxburg, ``A tutorial on spectral clustering,'' \emph{Statistics and
  Computing}, vol.~17, no.~4, pp. 395--416, Dec. 2007.

\bibitem{Spielman_alocal}
D.~A. Spielman and S.~hua Teng, ``A local clustering algorithm for massive
  graphs and its application to nearly-linear time graph partitioning,''
  \emph{SIAM J. Comput.}, vol.~42, no.~1, pp. 1--26, Jan. 2013.

\bibitem{AbbeSBM2018}
E.~Abbe, ``Community detection and stochastic block models: Recent
  developments,'' \emph{Journal of Machine Learning Research}, vol.~18, no.
  177, pp. 1--86, 2018.

\bibitem{PrecPockChambolle2011}
T.~Pock and A.~Chambolle, ``Diagonal preconditioning for first order
  primal-dual algorithms in convex optimization,'' in \emph{IEEE ICCV},
  Barcelona, Spain, Nov. 2011.

\bibitem{Anis2016ExpSampSet}
A.~Anis, A.~Gadde, and A.~Ortega, ``Efficient sampling set selection for
  bandlimited graph signals using graph spectral proxies,'' \emph{IEEE Trans.
  Signal Processing}, vol.~64, no.~14, pp. 3775--3789, 2016.

\bibitem{NetworkLasso}
D.~Hallac, J.~Leskovec, and S.~Boyd, ``Network lasso: Clustering and
  optimization in large graphs,'' in \emph{Proc. SIGKDD}, 2015, pp. 387--396.

\bibitem{BertsekasNonLinProgr}
D.~P. Bertsekas, \emph{Nonlinear Programming}, 2nd~ed.\hskip 1em plus 0.5em
  minus 0.4em\relax Belmont, MA: Athena Scientific, June 1999.

\bibitem{RockafellarBook}
R.~T. Rockafellar, \emph{Convex Analysis}.\hskip 1em plus 0.5em minus
  0.4em\relax Princeton, NJ: Princeton Univ. Press, 1970.

\bibitem{Bauschke:2017}
H.~Bauschke and P.~Combettes, \emph{Convex Analysis and Monotone Operator
  Theory in Hilbert Spaces}, 2nd~ed.\hskip 1em plus 0.5em minus 0.4em\relax New
  York: Springer, 2017.

\bibitem{becker2011nesta}
S.~Becker, J.~Bobin, and E.~J. Cand{\`e}s, ``{NESTA}: a fast and accurate
  first-order method for sparse recovery,'' \emph{SIAM J. Imaging Sci.},
  vol.~4, no.~1, pp. 1--39, 2011.

\bibitem{Condat2013}
L.~Condat, ``A primal--dual splitting method for convex optimization involving
  lipschitzian, proximable and linear composite terms,'' \emph{Journal of Opt.
  Th. and App.}, vol. 158, no.~2, pp. 460--479, Aug. 2013.

\bibitem{GoldbergTarjan2014}
A.~V. Goldberg and R.~E. Tarjan, ``Efficient maximum flow algorithms,''
  \emph{Communications of the ACM}, vol.~57, no.~8, pp. 82--89, 2014.

\bibitem{Orlin2013}
J.~B. Orlin, ``Max flows in o(nm) time, or better,'' in \emph{STOC '13 Proc.
  55th Annual ACM Symp. on Th. of Computing}, 2013, pp. 765--774.

\bibitem{barabasi2016network}
A.-L. Barab\'{a}si and M.~P\'{o}sfai, \emph{Network science}.\hskip 1em plus
  0.5em minus 0.4em\relax Cambridge, UK: Cambridge Univ. Press, 2016.

\bibitem{Karger1999}
D.~R. Karger, ``Random sampling in cut, flow, and network design problems,''
  \emph{Mathematics of Operations Research}, vol.~24, no.~2, 1999.

\bibitem{KleinbergTardos2006}
J.~Kleinberg and E.~Tardos, \emph{Algorithm Design}.\hskip 1em plus 0.5em minus
  0.4em\relax Addison Wesley, 2006.

\bibitem{JungnckelBook}
D.~Jungnickel, \emph{Graphs, Networks and Algorithms}, 4th~ed.\hskip 1em plus
  0.5em minus 0.4em\relax Springer Berlin Heidelberg, 2013.

\bibitem{xin2013graphx}
R.~S. Xin, J.~E. Gonzalez, M.~J. Franklin, and I.~Stoica, ``Graphx: A resilient
  distributed graph system on spark,'' in \emph{Proc. First Int. Workshop on
  Graph Data Management Experience and Systems}, Jun. 2013.

\bibitem{zaharia2010spark}
M.~Zaharia, M.~Chowdhury, M.~Franklin, S.~Shenker, and I.~Stoica, ``Spark:
  Cluster computing with working sets,'' in \emph{HotCloud'10}, Berkeley, CA,
  USA, 2010.

\bibitem{PhysRevE.78.046110}
\BIBentryALTinterwordspacing
A.~Lancichinetti, S.~Fortunato, and F.~Radicchi, ``Benchmark graphs for testing
  community detection algorithms,'' \emph{Phys. Rev. E}, vol.~78, p. 046110,
  Oct 2008. [Online]. Available:
  \url{https://link.aps.org/doi/10.1103/PhysRevE.78.046110}
\BIBentrySTDinterwordspacing

\bibitem{Amdahl67}
G.~M. Amdahl, ``Validity of the single processor approach to achieving
  large-scale computing capabilities,'' in \emph{AFIPS}, 1967, pp. 483--485.

\bibitem{Lichman:2013}
\BIBentryALTinterwordspacing
M.~Lichman, ``{UCI} machine learning repository,'' 2013. [Online]. Available:
  \url{http://archive.ics.uci.edu/ml}
\BIBentrySTDinterwordspacing

\bibitem{KaulYangJensen2013}
M.~Kaul, B.~Yang, and C.~S. Jensen, ``Building accurate 3d spatial networks to
  enable next generation intelligent transportation systems,'' in \emph{IEEE
  Int. Conf. on Mobile Data Management}, Milan, 2013, pp. 137--146.

\end{thebibliography}

\onecolumn

\end{document}